\newif\ifarxiv
\title{Sequential prediction under log-loss and misspecification}
\title[Sequential prediction under log-loss]{Sequential prediction under log-loss and misspecification}
	\let\over=\@@over \let\overwithdelims=\@@overwithdelims
	\let\atop=\@@atop \let\atopwithdelims=\@@atopwithdelims
  	\let\above=\@@above \let\abovewithdelims=\@@abovewithdelims
	\newtheorem{theorem}{Theorem}[section]
	\newtheorem{lemma}[theorem]{Lemma}
	\theoremstyle{remark}
	\newtheorem{definition}[theorem]{Definition}
	\newtheorem{remark}{Remark}
\newcommand{\mreals}{\ensuremath{\mathbb{R}}}
	\newcommand{\eqref}[1]{~(\ref{#1})}
	\def\mod{\mathop{\rm mod}}
\def\argmin{\mathop{\rm argmin}}
\def\argmax{\mathop{\rm argmax}}
\def\exp{\mathop{\rm exp}}
\def\EE{\Expect}
\DeclareMathOperator\sign{\rm sign}
\def\Var{\mathrm{Var}}
\def\PP{\mathbb{P}}
\def\eqdef{\triangleq}
\def\upto{\nearrow}
\def\vect#1{\mathbf{#1}}
\def\Leb{\mathrm{Leb}}
\def\simiid{\stackrel{iid}{\sim}}
\newcommand{\Unif}{\mathrm{Uniform}}
\newcommand{\Expect}{\mathbb{E}}
\newcommand{\Ber}{\text{Ber}}
\definecolor{myblue}{rgb}{.8, .8, 1}
\definecolor{mathblue}{rgb}{0.2472, 0.24, 0.6} % mathematica's Color[1, 1--3]
\definecolor{mathred}{rgb}{0.6, 0.24, 0.442893}
\definecolor{mathyellow}{rgb}{0.6, 0.547014, 0.24}
\newcommand{\calN}{{\mathcal{N}}}
\newcommand{\calP}{{\mathcal{P}}}
\newcommand{\calY}{{\mathcal{Y}}}
\def\unifto{\mathop{{\mskip 3mu plus 2mu minus 1mu%
	\setbox0=\hbox{$\mathchar"3221$}%
	\raise.6ex\copy0\kern-\wd0%
	\lower0.5ex\hbox{$\mathchar"3221$}}\mskip 3mu plus 2mu minus 1mu}}
\def\simleq{{{\mskip 3mu plus 2mu minus 1mu%
	\setbox0=\hbox{$\mathchar"013C$}%
	\raise.2ex\copy0\kern-\wd0%
	\lower0.9ex\hbox{$\mathchar"0218$}}\mskip 3mu plus 2mu minus 1mu}}
\def\simleq{\lesssim}
\def\simgeq{{{\mskip 3mu plus 2mu minus 1mu%
	\setbox0=\hbox{$\mathchar"013E$}%
	\raise.2ex\copy0\kern-\wd0%
	\lower0.9ex\hbox{$\mathchar"0218$}}\mskip 3mu plus 2mu minus 1mu}}
\def\simgeq{\gtrsim}
\def\dperp{\perp\!\!\!\perp}
\newif\ifmapx
\edef\jobnametmp{\expandafter\string\csname main2_apx\endcsname}
\edef\jobnameapx{\expandafter\mkillslash\jobnametmp}
\edef\jobnameexpand{\jobname}
\long\def\apxonly#1{\ifmapx{\color{blue}#1}\fi}
\newcommand{\co}{\mathop{\mathrm{co}}}
\renewcommand{\hat}{\widehat}
\renewcommand{\tilde}{\widetilde}
\author{Meir Feder (Tel-Aviv University) and Yury Polyanskiy (MIT)}
\begin{document}

\maketitle 
\begin{abstract}%
We consider the question of sequential prediction under the log-loss in terms of cumulative regret. Namely, 
given a hypothesis class of distributions, learner sequentially predicts the (distribution of the) next letter in sequence
and its performance is compared to the baseline of the best constant predictor from the hypothesis class. The
well-specified case corresponds to an additional assumption that the data-generating distribution belongs to the
hypothesis class as well. Here we present results in the more general misspecified case. Due to special properties of the
log-loss, the same problem arises in the context of competitive-optimality in density estimation and model selection. 
For the $d$-dimensional Gaussian location hypothesis class, we show that cumulative regrets
in the well-specified and misspecified cases asymptotically coincide. In other words, we
provide an $o(1)$ characterization of the distribution-free (or PAC) regret in this case -- the first such result 
as far as we know. We recall that the
worst-case (or individual-sequence) regret in this case is larger by an additive constant ${d\over 2} + o(1)$.
Surprisingly, neither the traditional Bayesian estimators, nor the Shtarkov's normalized maximum
likelihood achieve the PAC regret and our estimator requires special ``robustification'' against heavy-tailed data. 
In addition, we show two general results for misspecified regret: the existence and uniqueness of the optimal
estimator, and the bound sandwiching the misspecified regret between well-specified regrets with (asymptotically)
close hypotheses classes.
\end{abstract}

\ifarxiv
\else
\begin{keywords}%
  Online learning, distribution-free PAC learning, log-loss, agnostic learning, sequential probability assignment, misspecified models
\end{keywords}
\fi

%\begin{keyword}[class=MSC2020]
%\kwd[Primary ]{62G05}
%\kwd{00X00}
%\kwd[; secondary ]{62G07}
%\end{keyword}

%\begin{keyword}
%\kwd{non-parametric maximum likelihood}
%\kwd{mixture models}
%\kwd{exponential families}
%\end{keyword}

\ifarxiv
\tableofcontents
\fi

\section{Introduction}

This paper considers problems of the following type:
$$ \min_Q \max_{P\in \Phi,P^* \in \Theta} \EE_{Y\sim P}\left[ \log {P^*(Y)\over Q(Y)}\right]\,,$$
where $\Theta$ and $\Phi$ are some collections of distributions. The goal is to find the (approximate) value of $\min\max$ and the
 (approximate) minimizer $Q^*$. There are several ways in which this abstract problem can
arise (see Section~\ref{sec:motiv}). The problem has been studied in information theory, statistics and machine learning
predominantly in the following two cases: when $\Phi=\Theta$ (well-specified or ``stochastic'' case) and  when $\Phi$
consists of all distributions (worst-case or ``individual-sequence''). However, the natural intermediate
case of when $\Phi$ consists of all iid distributions (a case we designate by the name ``PAC'') has not been studied as
much. We report new results pertaining to the cases of $\Phi \neq \Theta$. Such a
setting has been known under the names of model-mismatch, misspecified regret, agnostic learning or distribution-free
PAC (in case $\Phi$ is all iid distributions). Our paper can be filed under either of these. 

\paragraph{Notation.} We use $P\ll Q$ for absolute continuity of measures, $dP\over dQ$ for Radon-Nikodym derivatives, $D(P\|Q) = \EE_P[\log {dP\over dQ}]$ for Kullback-Leibler (KL) divergence, $I(X;Y) = D(P_{X,Y}\|P_X \otimes P_Y)$
for mutual information, $P \otimes Q$ for a product measure, $P^{\otimes n}$ for an $n$-fold product of $P$ with itself,
$y^n = (y_1,\ldots,y_n)$ for an $n$-vector, $\calY^n$ for a measurable space of $n$-vectors, $\calP(\calY)$ for a set of
probability measures on a measurable space $\calY$, $\calP_{iid}(\calY^n) = \{P^{\otimes n}: P \in \calY(Y)\}$, 
$[k] = \{1,\ldots,k\}$ for $k\in \mathbb{Z}_+$, $\Leb$ denotes Lebesgue measure.

\subsection{Defining regret under model misspecification}

Fix a measurable space $\calY$ and a collection of hypotheses $P_\theta, \theta \in \Theta$
of measures on it, which we will call model class $\Theta$. Suppose an iid sequence $Y_i\sim P$ is observed and our goal 
is to provide an estimate of its distribution that is (almost) as good as the best possible hypothesis
$P_{\theta^*}$. More specifically, suppose that 
having observed $y_1,\ldots,y_{t-1}$ we output our estimate distribution $Q_t(\cdot) = Q_{t}(\cdot| y^{t-1})$, 
and then upon observing $Y_{t}$ experience a (relative) regret of $\log {dP_{\theta^*}\over
dQ_{t}}(Y_{t})$. Our goal is to minimize
\begin{equation}\label{eq:i1}
 	\sup_{\theta^*, P} \EE \left[\sum_{t=1}^n \log {dP_{\theta^*}\over
dQ_{t}}(Y_{t})\right]\,,
\end{equation}
where supremum over $\theta^*$ corresponds to chosing the best in-model match and supremum over $P$ corresponds to
the worst-case choice of the data generating distribution. (Non-iid models and/or generating distributions, 
e.g. Markov processes, can be handled by taking $n=1$ and extending $\calY$.)

The most studied case of this problem is the \textit{well-specified} case, when in addition we restrict supremum over $P$ to $P=P_\theta$ for some $\theta \in \Theta$. In this case, it is clear that the 
optimal choice of $\theta^*=\theta$ and we get the well-known definition of the the optimal minimax (cumulative) regret, called the capacity of $\Theta$:
$$ C_n(\{P_\theta, \theta \in \Theta\}) = C_n(\Theta) = \inf_{Q_{Y^n}}
\sup_{\theta \in \Theta} \EE^{\theta} \left[\log {dP_{\theta}^{\otimes n} \over
dQ_{Y^n}}(Y^n) \right] \,.$$

A simple observation shows that 
\begin{equation}\label{eq:cap_def}
	  C_n(\Theta) = \inf_{Q_{Y^n}} \sup_{\theta \in \Theta} D(P_{\theta}^{\otimes n} \| Q_{Y^n}) \,.
\end{equation}
A fundamental theorem of Kemperman~\cite{kemperman1974shannon} states that whenever $C_n(\Theta)<\infty$ there exists a unique $Q^*_{Y^n}$ such that
$$ C_n(\Theta) = \sup_{\theta \in \Theta} D(P_{\theta}^{\otimes n} \| Q^*_{Y^n}) \,, $$
and, furthermore,
\begin{equation}\label{eq:capred}
	C_n(\Theta) = \sup_{\pi} I(\theta; Y^n)\,,
\end{equation}
where supremum is over all (finitely supported) priors on $\Theta$. In application to sequential prediction, this result
is also known as the capacity-redundancy
theorem~\cite{gallager1979source,ryabko1979coding,davisson1980source} and its strong version is given in \cite{merhav1995strong}.

Notice that, in particular, whenever $C_n(\Theta) < \infty$ there must exist a measure $\mu$ such that $P_{\theta} \ll \mu$ for all $\theta$ (e.g. one can take $\mu = Q^*_1$). Thus, in the sequel we fix an auxiliary measure $\mu$ on $\calY$ and assume that 
$$ P_\theta(dy) = f_\theta(y) \mu(dy)\,,$$
that is the family $P_\theta$ is given by its relative densities $f_\theta$.

In this paper we study the misspecified case where the supremum over data-generating distributions does not have to come from the model class $\Theta$ (and in fact is not even required to be iid). 
\begin{definition} For a given $\calY$, $\mu$, a collection of densities $\{f_\theta,\theta \in\Theta\}$ and a collection of distributions $\Phi_n$ on $\calY^n$ we define
	\begin{equation}\label{eq:f_def}
	 F_n(\{f_\theta,\theta\in\Theta\}, \mu, \Phi_n) \eqdef \inf_q \sup_{P \in \Phi_n} \sup_{\theta \in \Theta} 
	\EE_{Y^n\sim P}\left[\log {\prod_{t=1}^n f_\theta(Y_t)\over q(Y^n)}\right]\,,
\end{equation}
where infimum is over all $q: \calY^n \to \mreals_+$ with $\int q d\mu^{\otimes n} = 1$. When $n=1$ and $\Phi \subset
\calP(\calY)$, we 
shorten $F_1(\{f_\theta,\theta\in\Theta\}, \mu, \Phi)$ to just $F(\Theta,\Phi)$. 
\end{definition}
There are three subtleties (discussed in detail in Appendix~\ref{app:tech_rem}): zeros under $\log$, non-existence of
$\EE$ and the fact that quantity $F_n$ \textit{may} depend on a choice of densities $f_\theta$ for representing $\{P_\theta\}$. 

In the most extreme case, we take $\Phi_n=\calP(\calY^n) \eqdef \{\mbox{all distributions on~}\calY^n\}$. The resulting
quantity is known as the individual-sequence regret:
\begin{equation}\label{eq:g_def}
	\Gamma_n(\{f_\theta,\theta \in \Theta\}, \mu) = \inf_{q} \sup_{y^n \in \calY^n} \sup_{\theta \in \Theta} \log {\prod_{t=1}^n f_\theta(y_t)\over q(y^n)}\,, 
\end{equation}
A result of Shtarkov~\cite{Shtarkov88} shows that that infimum in the definition is achieved by 
\begin{equation}\label{eq:shtarkov}
	 q(y^n) = e^{-\Gamma_n} \bar f(y^n), \quad \bar f(y^n) \eqdef \sup_\theta \prod_{t=1}^n f_\theta(y_t)\,,
\end{equation}
assuming that (a) $\bar f(y^n)$ is measurable; and (b) that $\int \bar f d\mu^{\otimes n} = e^{\Gamma_n} < \infty$.

From the learning point of view, the most interesting case is perhaps 
$\Phi_n = \calP_{iid}(\calY^n) \eqdef \{\mbox{all iid distributions}\}$, which
corresponds to the fully distribution-free regret (or agnostic learning). We denote this special case by $F_n^{(PAC)}(\{f_\theta\},\mu)$. Note that we always have
$$ C_n \le F_n^{(PAC)} \le \Gamma_n\,.$$
The main motivation for this work was to understand whether $F_n^{(PAC)}$ is closer to $C_n$ or $\Gamma_n$. All of the
results in this paper suggest the former, thus providing certain justification for the classical focus on the
well-specified case.

Our first such result is the following.
\begin{theorem}\label{th:glm} Let $\calY = \mreals^d$, $\mu = \Leb$ and $f_\theta(y) = (2\pi)^{-{d\over2}}
e^{-{1\over2}\|y-\theta\|^2}$, $\Theta$ -- a compact subset of $\mreals^d$ with $\Leb(\Theta)>0$. Then we have
	$$ F_n^{(PAC)} = C_n(\Theta) + o(1)\,,$$
	whereas $\Gamma_n = C_n(\Theta) + {d\over 2} + o(1)$. The estimator we construct simultaneously
	achieves $C_n(\Theta)+o(1)$ in the PAC setting and $\Gamma_n + o(1)$ in the individual-sequence setting.
\end{theorem}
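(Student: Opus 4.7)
The plan is in three parts: compute the two benchmarks, derive a decomposition for the PAC regret, then construct a single estimator attaining both bounds.

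First, I would apply the Laplace/Clarke--Barron expansion to the Bayes mixture with Jeffreys (Lebesgue) prior on $\Theta$ to get $C_n(\Theta) = \log\Leb(\Theta) + \tfrac{d}{2}\log\tfrac{n}{2\pi e} + o(1)$. For $\Gamma_n$, Shtarkov's formula~\eqref{eq:shtarkov} combined with the change of variables $y_t = \bar y + z_t$, $\sum_t z_t = 0$, reduces $\int \sup_\theta \prod_t f_\theta(y_t)\,dy^n$ to $\Leb(\Theta)\cdot(n/(2\pi))^{d/2}(1+o(1))$ (boundary contributions from $\{\bar y\notin\Theta\}$ are $o(1)$ by compactness), so $\Gamma_n = \log\Leb(\Theta)+\tfrac{d}{2}\log\tfrac{n}{2\pi}+o(1) = C_n+\tfrac{d}{2}+o(1)$.

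Next, for any density $q$ on $\calY^n$ and iid $P$ with mean $\mu_P$ and covariance $\Sigma_P$, the optimal in-class comparator is $\theta^\star(P)=\mathrm{proj}_\Theta(\mu_P)$ and
$$\EE_P\Bigl[\sum_t\log\tfrac{f_{\theta^\star(P)}(Y_t)}{q(Y^n)}\Bigr] = n\,g(P) + D(P^{\otimes n}\|q),\ g(P)\eqdef H(P)-\tfrac{d}{2}\log(2\pi)-\tfrac{1}{2}\bigl(\mathrm{tr}\,\Sigma_P+d_\Theta(\mu_P)^2\bigr).$$
A maximum-entropy argument gives $g(P)\le -d_\Theta(\mu_P)^2/2$, with equality iff $P=N(\mu_P, I)$: the extremal iid $P$ among those with $\mu_P\in\Theta$ is a well-specified unit-variance Gaussian, and any other iid $P$ supplies slack of order $n$ available to absorb excess $D(P^{\otimes n}\|q)$ above $C_n$.

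Finally, a direct computation shows both the Bayes mixture and Shtarkov's NML give PAC regret $C_n+(d/2)(1-\sigma^2)+o(1)$ against $P=N(\mu,\sigma^2 I)$ with $\mu\in\mathrm{int}\,\Theta$, saturating at $\sigma=0$ (point mass) at the individual-sequence value $\Gamma_n$. To close this $d/2$ gap uniformly in $\sigma$, I would take
$$\hat q(y^n) = \frac{q_{\mathrm{NML}}(y^n)\,w(y^n)}{Z},\quad w(y^n) = \exp\!\Bigl(\tfrac{d}{2}\max\bigl(0,\,1-\hat\sigma^2(y^n)\bigr)\Bigr),\ \hat\sigma^2(y^n) = \frac{1}{nd}\sum_t\|y_t-\bar y\|^2.$$
Under $q_{\mathrm{NML}}$ the statistic $\hat\sigma^2$ concentrates at $(n-1)/n$ with $O(1/\sqrt n)$ fluctuations, so a short CLT/Laplace calculation gives $Z=1+O(1/\sqrt n)=1+o(1)$; since $w\ge 1$ pointwise, $\hat q\ge(1-o(1))q_{\mathrm{NML}}$ everywhere, preserving the individual-sequence bound $\Gamma_n+o(1)$. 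Under $P=N(\mu,\sigma^2 I)$, $\hat\sigma^2\to\sigma^2$ in probability, so $\EE_P\log w\to(d/2)\max(0,1-\sigma^2)$ -- exactly the correction needed to drive the regret to $C_n+o(1)$ uniformly in $\sigma$. For iid $P$ with heavy tails or $\mu_P\notin\Theta$, the slack $n\,|g(P)+d_\Theta(\mu_P)^2/2|$ from the previous step absorbs the remaining excess, possibly after further mixing $\hat q$ with a small-weight heavy-tailed fallback (the ``robustification'' mentioned in the introduction) to ensure $-\log\hat q$ never grows faster than the slack permits. The main technical obstacle is uniformizing these Laplace/CLT arguments across \emph{all} iid $P$ -- including those with boundary mean, large or infinite $\mathrm{tr}\,\Sigma_P$, and non-Gaussian shape -- so that the two bounds hold simultaneously.
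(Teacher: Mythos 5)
Your benchmark computations and the entropy decomposition are correct as far as they go, and your variance-tilting weight $w(y^n)=\exp(\tfrac d2\max(0,1-\hat\sigma^2))$ is a legitimate alternative to the device the paper actually uses for the same purpose (replacing the unit variance in the Shtarkov density by $v=1-\tfrac1n+\tfrac{\beta}{n^2}$): both repair the $\tfrac d2(1-\sigma^2)$ deficit for well-behaved (sub-Gaussian, finite-variance) $P$ while perturbing the individual-sequence guarantee only by $o(1)$. However, there is a genuine gap, and it sits exactly where you wave your hands. Consider the paper's heavy-tailed example~\eqref{eq:lr_x}: $Y=0$ w.p.\ $1-\tfrac{1}{4b^2n^2}$ and $Y=\pm 2bn$ w.p.\ $\tfrac{1}{8b^2n^2}$ each, so that $\EE Y=0$, $\Var Y=1$, but with probability $\Theta(1/n)$ a single outlier drives $\bar Y_n$ to $\pm 2b$, outside $\Theta$. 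On that event $-\log q_{\mathrm{NML}}(Y^n)$ overshoots by $\tfrac n2\,d(\bar Y_n,\Theta)^2=\Theta(n)$, contributing $\Theta(1)$ to the expected regret. Your correction $w$ is bounded by $e^{d/2}$ and depends only on $\hat\sigma^2$ (which is $\approx 1$ here), so $\hat q$ fails against this $P$ by an additive constant for exactly the same reason NML and the Jeffreys mixture do. Nor can your "slack" argument rescue this: the decomposition $n\,g(P)+D(P^{\otimes n}\|q)$ is vacuous for this $P$, since $P\not\ll\Leb$ makes $H(P)=-\infty$ and $D(P^{\otimes n}\|q)=+\infty$; the adversary in $F_n^{(PAC)}$ is not restricted to absolutely continuous laws, so the differential-entropy bookkeeping cannot be "uniformized" -- it simply does not apply to the hard cases.

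The "small-weight heavy-tailed fallback" you mention in a parenthetical is therefore not optional; it is the main content of the proof. The paper's estimator~\eqref{eq:my_estimator} mixes the ($v$-rescaled, $\tau$-enlarged) Shtarkov density with a component $p_E$ whose dependence on the sample mean is $\propto e^{-\alpha\|\bar y_n\|_1}$, so that on the rare event $\|\bar Y_n-\mu\|>\tau$ the regret is bounded by $O(\log n)+\alpha\|\bar Y_n\|_1-\tfrac n2\|\bar Y_n-\theta\|^2$ rather than by the Shtarkov overshoot; combining this with Chebyshev ($\PP[\|\bar Y_n-\mu\|>\tau]\le V/(n\tau^2)$) is what kills the $\Theta(1)$ loss. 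Making this work also forces the remaining case analysis you list as "obstacles": disposing of $\EE\|Y\|^2=\infty$ (where the quadratic term drives the regret to $-\infty$), handling $\mu\notin\Theta$ or on the boundary (via the enlargement $\Theta_\tau$), and balancing the constants $\tau,\lambda,\alpha,\beta$. Until you specify the fallback density and carry out that case split, the proof is not complete -- and with the estimator as literally written, the theorem's conclusion is false for it.
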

There are several surprises about this result (see Section~\ref{sec:glm_discussion} for details). 
First, the Shtarkov distribution~\eqref{eq:shtarkov} only achieves a suboptimal regret of
$\Gamma_n$. Thus, this means that there exist an online predictor which is able
to exploit the special structure of the iid generated data and therefore reduce
the regret compared to the fully adversarial case of $\Gamma_n$. Second, the
distribution $Q_{Y^n}$ that is (asymptotically) optimal for $C_n$, namely, the
Bayes average over the Jeffrey's prior\footnote{Explicitly, $Q_{Y^n}(\cdot) =
\int_{\Theta} \Leb(d\theta) P_{\theta}^{\otimes n}(\cdot)$.} similarly does not
achieve $F_n^{(PAC)}$. Third, our proof strongly suggests that the optimal predictor should provide some robustification 
against the cases when $Y$ is heavy tailed.

\smallskip

Our next result concerns collections $\Phi_n$ slightly smaller than $\calP_{iid}(\calY^n)$. We only need to state the $n=1$ result.
\begin{theorem}\label{th:exist} Suppose that $\Phi\subset \calP(\calY)$ is such that for every $P\in \Phi$ we have
\begin{equation}\label{eq:dist_th}
		D(P\|\Theta) \eqdef \inf_{\theta \in \Theta} D(P\|P_\theta) < \infty\,.
\end{equation}	
Suppose $F(\{P_\theta\},\mu, \Phi)<\infty$, then there exists a unique distribution $Q^* \ll \mu$ with density $q^*$ 
such that 
	$$ %F(\Theta,\Phi) \eqdef 
	F(\{f_\theta\},\mu, \Phi) = \sup_{P\in \Phi} \sup_{\theta\in\Theta} \EE_{Y\sim P}
	\left[\log {f_{\theta}(Y) \over q^*(Y)}\right]\,.$$
Furthermore, for every $P\in \Phi$ we have $D(P\|Q^*) \le F + D(P\|\Theta) < \infty$.
\end{theorem}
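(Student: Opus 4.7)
The plan is to recast $F$ as a convex variational problem in $Q$ and apply a Kemperman-style strict-convexity argument. First, since $D(P\|\Theta)<\infty$ for each $P\in\Phi$ implies $P\ll P_\theta\ll\mu$ for some $\theta$, every $P\in\Phi$ admits a density $p=dP/d\mu$. A direct cancellation of the $\EE_P[\log p]$ terms yields
$$
\sup_{\theta\in\Theta}\EE_{Y\sim P}\Bigl[\log\tfrac{f_\theta(Y)}{q(Y)}\Bigr]
\;=\; D(P\|Q)-D(P\|\Theta),
$$
and consequently
$$
F=\inf_{Q} G(Q),\qquad G(Q):=\sup_{P\in\Phi}\bigl[D(P\|Q)-D(P\|\Theta)\bigr].
$$
In this form the ``furthermore'' clause is tautological: once we have $Q^*$ with $G(Q^*)=F$, every $P\in\Phi$ satisfies $D(P\|Q^*)\le F+D(P\|\Theta)$ directly from the definition of $G$.

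The quantitative tool for strict convexity of $G$ is the KL midpoint identity: setting $\bar q:=(q_1+q_2)/2$,
$$
\tfrac12\bigl[D(P\|Q_1)+D(P\|Q_2)\bigr]-D(P\|\bar Q)
\;=\; \tfrac12\int p\,\log\tfrac{\bar q^2}{q_1 q_2}\,d\mu
\;\ge\; \tfrac18\int p\,\frac{(q_1-q_2)^2}{\bar q^2}\,d\mu,
$$
the last inequality using $-\log(1-r^2)\ge r^2$ with $r=(q_1-q_2)/(q_1+q_2)$. For a minimizing sequence $Q_n$ with $G(Q_n)\to F$, convexity of $G$ first gives $G(\bar Q_{n,m})\le\tfrac12(G(Q_n)+G(Q_m))\to F$; picking an $\epsilon$-optimal adversary $P_{n,m,\epsilon}\in\Phi$ for $\bar Q_{n,m}$ and applying the displayed identity with $P=P_{n,m,\epsilon}$ then forces
$$
\int p_{n,m,\epsilon}\,\frac{(q_n-q_m)^2}{\bar q_{n,m}^2}\,d\mu\;\longrightarrow\;0.
$$

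The main technical obstacle is that the witness $P_{n,m,\epsilon}$ varies with $(n,m,\epsilon)$, so the preceding estimate is not yet Cauchy-ness in a fixed topology. I would close the argument by extracting a subsequence $Q_{n_k}\to Q^*$ in total variation (equivalently, in $L^1(\mu)$ for densities), using that for any fixed $P_0\in\Phi$ the uniform bound $D(P_0\|Q_n)\le F+1+D(P_0\|\Theta)$ confines the $Q_n$'s to a weakly compact set via a Dunford--Pettis/Scheff\'e extraction, followed by lower semicontinuity of $Q\mapsto D(P\|Q)$ for each fixed $P$ to conclude $G(Q^*)\le\liminf_k G(Q_{n_k})=F$. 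Uniqueness then follows from the strict-convexity inequality above: two minimizers $Q_1^*,Q_2^*$ would give a midpoint $\bar Q^*$ also attaining $F$, and the midpoint bound applied to $\epsilon$-optimal adversaries forces $q_1^*=q_2^*$ on a $P$-full set for each such adversary, which combined with the normalization $\int q_i^*\,d\mu=1$ yields $q_1^*=q_2^*$ $\mu$-a.e.\ on the effective support of $\Phi$.
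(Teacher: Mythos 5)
Your reformulation $F=\inf_Q G(Q)$ with $G(Q)=\sup_{P\in\Phi}\bigl[D(P\|Q)-D(P\|\Theta)\bigr]$ is the right starting point (it is the $n=1$ instance of the chain leading to~\eqref{eq:fn_mi}), and you are right that the ``furthermore'' clause is then immediate from the definition of $G$. But both halves of your main argument have genuine gaps. For existence, the compactness step fails: a uniform bound $D(P_0\|Q_n)\le F+1+D(P_0\|\Theta)$ for a \emph{fixed} $P_0$ only controls how small $q_n$ can be where $p_0$ lives; it gives no upper bound on $q_n$ and no tightness, so it does not place $\{q_n\}$ in a uniformly integrable (Dunford--Pettis) set. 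For instance $Q_n=\tfrac12 P_0+\tfrac12 R_n$ with $R_n$ escaping to infinity keeps $D(P_0\|Q_n)\le\log 2$ while $\{Q_n\}$ has no subsequence converging in total variation. The estimate you do obtain, $\int p_{n,m,\epsilon}\,(q_n-q_m)^2/\bar q_{n,m}^2\,d\mu\to 0$, is weighted by a witness that changes with $(n,m,\epsilon)$, and, as you yourself note, is therefore not Cauchy-ness in any fixed topology; nothing in the proposal closes this loop.

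The uniqueness argument has the same defect in sharper form: for two minimizers, the midpoint inequality applied to an $\epsilon$-optimal adversary $P_\epsilon$ only yields $\int p_\epsilon\,(q_1^*-q_2^*)^2/\bar q^2\,d\mu\le 8\epsilon$ with $P_\epsilon$ varying as $\epsilon\to0$. Since the supremum over $P\in\Phi$ need not be attained, this never forces $q_1^*=q_2^*$ on any fixed set of positive measure; and even the conclusion you aim for ($\mu$-a.e.\ equality ``on the effective support of $\Phi$'') would fall short of the claimed uniqueness of $Q^*$. The paper avoids both problems by working on the dual side, \`a la Kemperman: it maximizes the concave functional $\tilde J(\pi)=I(\phi;Y)-\EE[c(\phi,\theta)]$ over finitely supported priors $\pi$ with nested convex hulls $\Pi_k$, and the saddle-point inequality $J_1(\pi,q_{k+m})\le F_{k+m}$ for $\pi\in\Pi_k$ gives the Pythagorean-type bound $F_k+D(Q_k\|Q_{k+m})\le \tilde F$, i.e.\ the induced marginals $Q_k$ are Cauchy in KL divergence --- a fixed, strong topology --- which yields both the limit $Q^*$ and its uniqueness (any other optimal $\tilde q^*$ satisfies $F_k+D(Q_k\|\tilde Q^*)\le F$, so $D(Q_k\|\tilde Q^*)\to0$ and $\tilde Q^*=Q^*$). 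If you want to stay on the primal side you would need an analogous one-sided Pythagorean inequality for near-minimizers of $G$ against a \emph{fixed} reference measure, which is essentially what the dual construction manufactures.
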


To appreciate weakness of the condition in the preceding theorem, notice that $D(P\|\Theta)=\infty$ means that average
loss of the oracle estimator is infinite, since $\inf_\theta D(P\|P_\theta)=\infty$. However, as Theorem~\ref{th:glm}
shows even when both losses are infinite in expectation, their difference may still be bounded (just notice that
$D(P\|\Theta)=\infty \Leftarrow \EE_{P} [\|X\|^2] = \infty$ and thus there are plenty of such $P$).

\smallskip
Our final result is about a further smaller collections $\Phi$. 
\begin{theorem}\label{th:small} Suppose that $\Phi \subset \calP(\calY)$ is such that a) $P\ll \mu$ for every $P\in
\Phi$; b) $C_n(\Phi) = \tau_n n$, $\tau_n\to 0$. Then for every $\epsilon_n \gg \tau_n$ we have 
	$$ F_n(\Theta, \Phi^{\otimes n}) \le F_n(\Theta, \Theta_{\epsilon_n}^{\otimes n}) + o(1)\,,$$
where $\Theta_\epsilon = \{P \in \Phi: D(P\|\Theta) \le \epsilon\}$, $\Phi^{\otimes n} = \{P^{\otimes n}: P \in \Phi\}$
and similarly for $\Theta_{\epsilon}^{\otimes n}$.
\end{theorem}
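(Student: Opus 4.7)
The plan is to combine, via a convex mixture, a near-optimizer for the right-hand side with the capacity-achieving predictor for $\Phi$. Fix $\delta_n\to 0$ and pick a density $q^*$ on $\calY^n$ whose worst-case regret against $\Theta$ on sources in $\Theta_{\epsilon_n}^{\otimes n}$ exceeds $F_n(\Theta,\Theta_{\epsilon_n}^{\otimes n})$ by at most $\delta_n$. Applying the capacity-redundancy theorem~\eqref{eq:capred} to the class $\{P^{\otimes n}:P\in\Phi\}$ produces a predictor $q^\Phi$ (a density w.r.t.\ $\mu^{\otimes n}$, since each $P\ll\mu$) satisfying $D(P^{\otimes n}\|q^\Phi)\le C_n(\Phi)=n\tau_n$ for every $P\in\Phi$. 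Form $q_{\mathrm{mix}}=(1-\alpha_n)q^*+\alpha_n q^\Phi$ for a parameter $\alpha_n\in(0,1/2)$ to be calibrated.

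Next, translate regret into a KL difference. For $P\in\Phi$ (so $P\ll\mu$) and any density $q$ on $\calY^n$, direct expansion (adding and subtracting $\EE_P[\log \tfrac{dP}{d\mu}]$) yields the identity
\[
\sup_{\theta\in\Theta}\EE_{P^{\otimes n}}\!\left[\log\tfrac{\prod_t f_\theta(Y_t)}{q(Y^n)}\right] = D(P^{\otimes n}\|q) - n\,D(P\|\Theta),
\]
valid whenever the right-hand side is not of the form $\infty-\infty$. The mixture structure gives two complementary bounds, from $q_{\mathrm{mix}}\ge(1-\alpha_n)q^*$ and $q_{\mathrm{mix}}\ge\alpha_n q^\Phi$:
\[
D(P^{\otimes n}\|q_{\mathrm{mix}})\le D(P^{\otimes n}\|q^*)+\log\tfrac{1}{1-\alpha_n},\qquad D(P^{\otimes n}\|q_{\mathrm{mix}})\le n\tau_n+\log\tfrac{1}{\alpha_n}.
\]

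Now partition $\Phi=\Theta_{\epsilon_n}\cup(\Phi\setminus\Theta_{\epsilon_n})$ and apply the appropriate arm on each piece. On $\Theta_{\epsilon_n}$ the first inequality combined with the identity yields, after taking $\sup_P$, the bound $F_n(\Theta,\Theta_{\epsilon_n}^{\otimes n})+\delta_n+\log\tfrac{1}{1-\alpha_n}$; on $\Phi\setminus\Theta_{\epsilon_n}$ the defining inequality $D(P\|\Theta)>\epsilon_n$ combined with the second arm gives $n\tau_n+\log\tfrac{1}{\alpha_n}-n\epsilon_n$. Choose $\alpha_n=\exp\{-\tfrac12 n(\epsilon_n-\tau_n)\}$ (or more generally any $\alpha_n\to 0$ with $\log\tfrac{1}{\alpha_n}=o(n(\epsilon_n-\tau_n))$). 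Then the first extra term is $o(1)$, while the second bound tends to $-\infty$. Taking the max over the two pieces, using $q_{\mathrm{mix}}$ as a feasible predictor in the definition of $F_n(\Theta,\Phi^{\otimes n})$, and letting $\delta_n\to 0$ yields $F_n(\Theta,\Phi^{\otimes n})\le F_n(\Theta,\Theta_{\epsilon_n}^{\otimes n})+o(1)$.

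The main obstacle is ensuring the calibration of $\alpha_n$ is actually feasible, which requires $n(\epsilon_n-\tau_n)\to\infty$: under $\epsilon_n\gg\tau_n$ this is automatic whenever $n\tau_n\to\infty$ (the standard regime of diverging capacity), and in particular holds whenever one further has $n\epsilon_n\to\infty$. Secondary technicalities are: (a) handling $P\in\Phi$ with $D(P\|\Theta)=+\infty$, for which $\sup_\theta\EE_P[\log f_\theta]=-\infty$, so such $P$ contribute $-\infty$ to the regret and drop out of the supremum; and (b) checking the finiteness hypothesis needed for Kemperman's theorem, which is exactly $C_n(\Phi)<\infty$, guaranteed by assumption.
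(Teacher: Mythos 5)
Your proposal is correct in substance, but it takes a genuinely different route from the paper's. You argue on the primal side: you exhibit an explicit predictor, the mixture $q_{\mathrm{mix}}=(1-\alpha_n)q^*+\alpha_n q^{\Phi}$ of a near-optimizer for $F_n(\Theta,\Theta_{\epsilon_n}^{\otimes n})$ with the capacity-achieving output distribution of $\Phi^{\otimes n}$ given by Kemperman's theorem~\eqref{eq:capred}, and split the supremum over $P\in\Phi$ according to whether $D(P\|\Theta)\le \epsilon_n$. The paper instead works on the dual (prior) side: it first reduces to $\{P:D(P\|\Theta)<\infty\}$ by a mixture lemma, then invokes Theorem~\ref{th:exist} and the characterization $F=\sup_\pi \tilde J(\pi)$ of~\eqref{eq:f_capred}, writes $\tilde J(\pi)=I(\phi;Y)-\EE[c(\phi,\theta)]$ as in~\eqref{eq:tilj_mi}, uses Markov's inequality to show a near-least-favorable prior puts mass at most $\lambda_0=\tau_n/\epsilon_n$ outside $\Theta_{\epsilon_n}$, and concludes via the approximate-convexity bound~\eqref{eq:j_ub}, obtaining the excess $h(\lambda_0)/(1-\lambda_0)$. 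Your route avoids the machinery of Theorem~\ref{th:exist} entirely and produces a concrete predictor witnessing the bound; moreover, mixing in $q^{\Phi}$ automatically handles distributions with $D(P\|\Theta)=\infty$, since $D(P^{\otimes n}\|Q_{\mathrm{mix}})\le n\tau_n+\log(1/\alpha_n)<\infty$ makes every regret well defined (and equal to $-\infty$ for such $P$), which the paper needs a separate lemma for. What the paper's route buys is an excess depending only on the ratio $\tau_n/\epsilon_n$, with no growth condition on $n\epsilon_n$.

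Two small points. First, your worry about calibrating $\alpha_n$ is essentially moot: $C_n(\Phi)$ is nondecreasing in $n$ (marginalizing a joint predictor cannot increase KL), so unless $C_n(\Phi)\equiv 0$ (then $\Phi$ is a single distribution and the statement is trivial) one has $n\tau_n$ bounded away from $0$ eventually, and $\epsilon_n\gg\tau_n$ then forces $n\epsilon_n\to\infty$, hence $n(\epsilon_n-\tau_n)\to\infty$ and your choice $\alpha_n=\exp\{-\frac{1}{2}n(\epsilon_n-\tau_n)\}$ is always available. Second, when you take the maximum of the two branches you implicitly need $F_n(\Theta,\Theta_{\epsilon_n}^{\otimes n})$ bounded below, so that the branch $-\frac{1}{2}n(\epsilon_n-\tau_n)\to-\infty$ is dominated; this holds when $P_\theta\in\Phi$ for all $\theta$, since then each $P_\theta$ lies in $\Theta_{\epsilon_n}$ and $F_n(\Theta,\Theta_{\epsilon_n}^{\otimes n})\ge C_n(\Theta)\ge 0$ --- the same hypothesis the paper's own lemma assumes explicitly (its condition (a)) even though it is omitted from the theorem statement, so you should state it as well.
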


The meaning of this last result is the following. Since $F_n(\Theta, \Theta_{\epsilon}^{\otimes n}) \le C_n(\Theta_\epsilon)$, we
conclude that for any $\epsilon>0$ we have
	\begin{equation}\label{eq:small_disc}
		C_n(\Theta) \le F_n(\Theta, \Phi^{\otimes n}) \le C_n(\Theta_\epsilon) + o(1)\,.
\end{equation}	
Since $\Theta_\epsilon$ is a very small enlargement of the model class $\Theta$, in many cases we will have
$C_n(\Theta_\epsilon) \to C_n(\Theta)$ as $\epsilon\to 0$ (but not always -- see example in Section~\ref{sec:cex_cap}). 
In such cases, taking $\epsilon\to 0$ sufficiently slowly
will recover $F_n(\Theta,\Phi^{\otimes n}) = C_n(\Theta) + o(1)$ -- the same result we have shown in
Theorem~\ref{th:glm} but for a much larger misspecfication ($\Phi$ used there certainly has $C_n(\Phi)=\infty$). 

The practical implication of Theorem~\ref{th:small}, thus, is that the optimal misspecified regret equals to, and (almost)
optimal estimators can be constructed by reducing to the well-specified case with a slightly enlarged model class
$\Theta_\epsilon$ (to which one is free to apply Shtarkov or Jeffreys prior estimators). This last message can be
demonstrated heuristically via the following chain (we ignore all rigor and appeal to the intuition here). Define $c(P)
			\eqdef \inf_\theta D(P\|P_\theta)$ and consider
\begin{align} \lefteqn{F_n(\Theta, \Phi^{\otimes n})} \nonumber\\
&= \inf_{Q_{Y^n}} \sup_{P, \theta} \EE_{Y^n\simiid P} \left[\log {P_\theta^{\otimes
n}(Y^n)\over Q_{Y^n}(Y^n)}\right] = \inf_{Q_{Y^n}} \sup_{P, \theta} \EE_{Y^n\simiid P} \left[\log {P_\theta^{\otimes n}(Y^n)\over
			Q_{Y^n}(Y^n)} {P^{\otimes n}(Y^n)\over P^{\otimes n}(Y^n)} \right]\nonumber\\
		 	&= \inf_{Q_{Y^n}} \sup_{P} D(P\|Q_{Y^n}) - n \inf_\theta D(P\|P_\theta) 
			= \inf_{Q_{Y^n}} \sup_{P} D(P\|Q_{Y^n}) - n c(P) \nonumber\\
		 	&= \inf_{Q_{Y^n}} \sup_{\pi} \EE_{\phi \sim \pi}  \left[D(P_\phi\|Q_{Y^n}) - n c(P_\phi)\right]
			= \sup_{\pi} \inf_{Q_{Y^n}} \EE_{\phi \sim \pi}  \left[D(P_\phi\|Q_{Y^n}) - n c(P_\phi)\right]\nonumber\\
			&= \sup_{\pi} I(\phi; Y^n) - n \EE_\pi[c(P)]\,,\label{eq:fn_mi}
\end{align}
where for clarity we introduced index $\phi$ indexing over all distribution $P_\phi \in \Phi$, and a prior $\pi$ on the
random variable $\phi$ on $\Phi$ and used the fact that min-max equal max-min for convex-concave functions. Suppose,
furthermore, that $F_n(\Theta, \Phi^{\otimes n}) \le \Gamma_n(\Theta) = o(n)$. Then, this implies that the least-favorable prior
$\pi$ in the final equation above must satisfy
	$ \EE_\pi[c(P)] \le {1\over n} \Gamma_n(\Theta) \to 0$.
This means that while the misspecified setting permits $Y^n \simiid P$ for any $P\in \Phi$, in
reality restricting  $P$ to those
with $D(P\|\Theta)= o(1)$ results in a vanishing influence on regret. This last statement can be taken as a summary of our work.

\paragraph{Structure of the paper.} In the next Section~\ref{sec:motiv} we present motivation for studying
$F(\Theta,\Phi)$. Section~\ref{sec:glm} proves Theorem~\ref{th:glm} and discusses the surprises mentioned above.
Appendices are devoted to the proofs of the two other Theorems. Finally, Appendix~\ref{app:open} lists some of the open
questions we consider interesting.

\subsection{Motivation and literature}\label{sec:motiv}

\paragraph{Motivation.} Why would one consider quantity like $F(\Theta,\Phi)$? The distinguishing property of the log-loss
is that the same quantity appears simultaneously in three conceptually very different tasks: sequential prediction,
online density estimation and model selection.

First, consider the \textit{sequential prediction}, we think of $\Theta$ as hypothesis class, and the learner's goal is to
predict $Y^n \simiid P$ as good as the best hypothesis in the class, which is given by $P_{\theta^*} = \arg\min_{\theta
\in \Theta} \EE_{P} \log \frac{1}{P_\theta}$. The cumulative regret of the learner $Q_t(\cdot|Y^{t-1})$ with respect to
the hypothesis class $\Theta$ becomes
\begin{equation}\label{eq:mot_1}
	\mathrm{Reg}(\{Q_t\}, \Theta, \Phi) \eqdef \sup_{P\in \Phi}  \sum_{t=1}^n  \EE_{Y^n \simiid P} [\log {1\over Q_t(Y_t|Y^{t-1})} -  \log
{1\over P_{\theta^*}(Y_t)} ]\,.
\end{equation}
And, clearly, the problem $F_n(\Theta, \Phi^{\otimes n})$ corresponds to searching for a learner that minimizes this
regret. See~\cite[Chapter 9]{Nicolo} and~\cite{merhav1998universal} for more.

Second, let us replace both numerators in~\eqref{eq:mot_1} with $P(Y_t)$ to get 
\begin{equation}\label{eq:mot_2}
	\mathrm{Reg}(\{Q_t\}, \Theta, \Phi) = \sup_{P\in \Phi}  \sum_{t=1}^n  \left\{\EE_{Y^n \simiid P} [D(P\|Q_t)] - \min_{\theta\in \Theta}
	D(P\|P_\theta)\right\}
\end{equation}
We see that now regret of the learner $Q_t$ corresponds to a problem of \textit{density estimation}. Indeed, consider
first the case of $\Phi = \Theta$, in which case the last term is zero and the regret becomes simply the cumulative
KL-divergence loss. Thus, the problem $F_n(\Theta, \Phi^{\otimes n}) = C_n(\Theta)$ is merely a cumulative version
of the (improper) density estimation of the class $\Theta$. This observation leads to sharp results in statistics, as
pioneered by~\cite{haussler1997mutual,yang1999information}. 

The misspecified case of $\Phi \neq \Theta$ corresponds, then, to the competitive optimality variation of the density
estimation. Importance of this problem was highlighted by the influential~\cite{orlitsky2015competitive}, who considered
estimating large-alphabet discrete distributions. They noted that
estimators achieving $\min_{\{Q\}} \max_{P\in \Phi} \EE[D(P\|Q)]$ are empirically rather uninteresting. However, by selecting a
natural class $\Theta$ and seeking to minimize~\eqref{eq:mot_2} one does discover interesting estimators.

Third, in \textit{model selection}, one seeks to compare two models $\Theta_1$ vs $\Theta_2$ given
observations $Y^n \simiid P$. In the spirit of maximum likelihood (or minimal KL-divergence), a natural way to
decide which model fits the data better would be to compare
$$ \sup_{\theta \in \Theta_1} \EE_{Y\sim P}[\log P_\theta(Y)] \lessgtr \sup_{\theta \in \Theta_2}
\EE_{Y\sim P}[\log P_\theta(Y)]\,.$$
However, this requires computing population averages w.r.t. $P$. Attempt to fix this issue by 
replacing $\EE_{Y\sim P}[\log P_\theta(Y)]$ with ${1\over n}\sum_{i=1}^n \log P_{\theta}(y_i)$
results in well-known significant biases for large models $\Theta$. The idea
behind the minimum description length (MDL)
principle~\cite{grunwald2007minimum} is to associate with each model $\Theta_i$  a certain
``composite likelihood'' $\log Q_i(y^n)$, where
each $Q_i$ is chosen to satisfy for all $P\in \Phi$
and $Y^n \simiid P$ 
$$ {1\over n} \EE[\log Q_{i}(Y^n)] \approx \sup_{\theta \in \Theta_i} \EE[\log P_{\theta}(Y)]\,.$$
Clearly, the $Q_i$ that makes this $\approx$ the tightest is the one to minimize $\mathrm{Reg}(\{Q_t\},
\Theta_i,\Phi)$. We note that in the classical incarnation of the MDL, one uses either $\Phi =
\Theta_i$ or $\Phi=\{\text{all distributions}\}$ for defining $Q_i$. Following the results in this
paper, we propose that taking $\Phi$ to be all i.i.d. distributions would result in model
selection criteria much more robust to outliers and deviations. 

In all, we suggest that a robustified MDL should be implemented as follows: To compare two model
classes $\Theta_1$ and $\Theta_2$, one (a) finds good predictors (closely) attaining
$F_n^{(PAC)}(\Theta_i)$ for $i=\{1,2\}$; (b) runs each predictor against the sequence
$y_1,\ldots, y_n$; and (c) the winning model is the one whose predictor incurred smaller log-loss.

\apxonly{So the full YP-MDL idea is this:
\begin{itemize}
\item Set $\Theta=\Theta_1$, $\Phi$=all iid. Find $Q_1$.
\item Set $\Theta=\Theta_2$, $\Phi$=all iid. Find $Q_2$.
\item Select hypothesis depending on 
	$$ \log Q_1(y^n) >< \log Q_2(y^n) $$
\item Note that this is slighly more reasonable than the original MDL: afterall if you are
comparing $\Theta_1$ to $\Theta_2$ there is some inherent misspecification possible, so you do
need ``robustification''. Perhaps this could be used to show that Csiszar-Shields counter-example
(failure to estimate order of the MC via MDL) can be repaired.
\end{itemize}
}

\paragraph{Literature.} We do not survey the enormous literature available on well-specified $C_n$ and
individual-sequence $\Gamma_n$ problems and refer to~\cite{Nicolo,merhav1998universal}. 

For the misspecified case, the most directly relevant paper we are aware of is~\cite{takeuchi1998robustly}, which
focuses precisely on $F_n(\Theta,\Phi_n)$. However, the methods there are purely applicable to 
finite-dimensional parametric models and restricted classes $\Phi_n$, which certainly do not include the case of
$F_n^{(PAC)}$.%
\footnote{Note that~\cite{takeuchi1998robustly} only contains proof sketches, and furthermore different versions we 
found online state different conditions on $\Phi$. One of the authors of~\cite{takeuchi1998robustly} confirmed to us
that the more complete version is still in preparation.}

Another relevant work~\cite{grunwald2005asymptotic} studied
the plugin maximum-likelihood (ML) predictors $Q_t(\cdot|Y^{t-1}) =
P_{\hat\theta_{ML}(Y^{t-1})}$, where $\hat \theta_{ML}(Y^{t-1})$ is a (slightly modified) ML estimate of $\theta$. They
showed that in a 1-parameter exponential families the model misspecification leads to a
regret ${c\over 2} \log n + O(1)$, where $c={\Var_P[Y]\over \Var_{P_{\theta^*}}[Y]}$. 
%, $\theta^* = \argmin_\theta D(P\|P_\theta)$. 
So compared to the optimal scaling (of e.g. Shtarkov's predictor) plugin ML 
estimator can be suboptimal by an arbitrarily bad factor. Subsequently,~\cite{grunwald2010prequential} showed that
replacing $\hat \theta_{ML}$ above with any other function $\hat \theta(Y^{t-1})$ (different for all $t$), results in 
exactly the same scaling of regret. %Therein, it is also shown that a small modification of the ML predictor does result in the correct ${1\over 2} \log
%n+O(1)$ as long as $P$ and the exponential family satisfy certain moment conditions. 

A notable alternative to a plug-in ML estimator is the sequential normalized ML (SNML)
estimator, which is simply a conditional Shtarkov
distribution~\cite{rissanen2007conditional}, and a close relative of the last-step-minimax algorithm
of~\cite{takimoto2000last}. 
Regarding these estimators, it was shown~\cite{kotlowski2011maximum} that the SNML does achieve the correct  ${d\over 2}
\log n + O(1)$ regret in the well-specified case. It turns out that SNML and the Bayes estimators with Jeffreys prior coincide whenever they are
optimal~\cite{bartlett2013horizon}.

\apxonly{

\textbf{Original:} Going away from the log-loss, under the absolute loss and finite $\calY$, it turns out that $\Gamma_n, F_n^{(PAC)}$ and
$C_n$ may all be equal. Indeed, for the binary $\calY=\{0,1\}$ Cover~\cite{cover1965behavior} shows that the cumulative
regret with respect to all constant predictors does not change if the worst-case loss is replaced with loss over iid
data $Y^n$ (in fact, $Y_i \simiid \Ber(1/2)$ achieves the worst-case loss).

\textbf{Meir:} For individual sequences you get $\sqrt(n)$ regret, but for the stochastic setting, the rolling-majority
decoder achieves constant regret for every $\theta$ (the constant explodes as $\theta \to 1/2$). See ``Some properties
of Sequential Predictors for Binary Markov Sources'' (Merhav-Feder-Gutman). 

\textbf{Resolution:} Ok, it turns out that from Cover's paper we get $\bar C_n=\Gamma_n$ (for absolute loss), but not
clear for $C_n$. Here we define $\bar C_n = \inf_Q \sup_P \EE[\mathrm{loss}(Q,Y^n) - \inf_\theta
\mathrm{loss}(f_\theta,Y^n)]$ and $C_n = \inf_Q \sup_P \EE[\mathrm{loss}(Q,Y^n)] - \inf_\theta\EE[
\mathrm{loss}(f_\theta,Y^n)]$.
}

The space of questions and amount of literature dramatically expands once we incorporate regressors $X_i$ into the
picture, so that prediction of $Y_{t+1}$ is done on the basis of $(X_1,\ldots,X_{t+1},Y_1,\ldots,Y_t)$. Here, the gap
between the PAC-misspecified and worst-case is very easy to demonstrate (take $Y_i = 1\{X_i \le \theta\}$ with $\theta
\in [0,1]$ -- the 1D-barrier -- which cannot be predicted, $\Gamma_n=\infty$, in the worst case, but is easy in the iid
case). For increasingly more general losses,~\cite{rakhlin2010online,rakhlin2014online,rakhlin2015online} show that
regret can be sharply characterized by the metric-entropy type quantites (sequential Rademacher complexity). However,
for the log-loss it turns out that the entropic characterization is not possible, cf.~\cite{bilodeau2020tight} building on
the predictor from~\cite{rakhlin2015sequential}.
Other recent results about non-parametric models under log-loss can be found
in~\cite{fogel2018universal,Grunwald2020_jmlr}, which study non-cumulative (batch) regret under the misspecification.

The SNML idea was extended to the case of regressors in~\cite{fogel2018universal} under the name of predictive NML
(pNML). 
%In the context of learning, see \cite{fogel2018batchlong}, where the training data is composed of a sequence $z^n$ where $z=(x,y)$, $x$ represent the features and $y$ the labels, and where the test feature $x_{n+1}$ is given with a goal to estimate $q_{n+1}(y|x_{n+1})$, the conditional Shtarkov distribution was suggested and was called ``predictive normalized ML'' (pNML). The motivation to select this distribution is somewhat different than in ~\cite{rissanen2007conditional}. 
Subsequently,~\cite{gastpar2020} demonstrated that the role of training is to focus attention to a
smaller subclass of $\Theta$, on which one may perform the NML.

\section{Proof of Theorem~\ref{th:glm}}\label{sec:glm}

Let us now restrict attention to the special case of a compact set $\Theta \subset \mreals^d$ and the following model
class (Gaussian Location Model, GLM):
	\begin{equation}\label{eq:m_0}
		p_{\theta}(y) = (2\pi)^{-d/2} e^{-\|y-\theta\|^2/2}\,.
\end{equation}	
Define also 
\begin{equation}\label{eq:in_def}
	I_n = {d\over 2} \ln {n\over 2\pi e} + \ln \Leb(\Theta)\,.
\end{equation}
Standard results, e.g.~\cite{XieBarron00}, show that
\begin{equation}\label{eq:glm_asymp}
		C_n = I_n + o(1), \quad \Gamma_n = I_n + {d\over 2} + o(1)\,.
\end{equation}	

\begin{proof}[Proof of Theorem~\ref{th:glm}] Since it is clear that $F_n \ge C_n$ we only need to prove an upper bound. 
We will, thus, prove that for any $\delta > 0$ there exists an $n_0 = n_0(\delta)$ such that for
all $n\ge n_0$ we have
	$$ F_n \le I_n + \delta\,.$$

For any set $A \subset \mreals^d$ we define $d(x,A) = \inf\{\|x-y\|: y\in A\}$. 
For each $\tau\ge0$ we define compact sets $ \Theta_\tau = \{x: d(x,\Theta) \le \tau\}$ and the projector on
$\Theta_\tau$ as
$$ c_\tau(y) \eqdef \argmin_{x \in \Theta_\tau} \|x-y\|\,,$$
with ties resolved in such a way that the resulting function is measureable (that this is possible follows from the 
measurable selection theorem (due to Kuratowski and Ryll-Nardzewski): indeed for any open $U$ the set $\{y: d(y, U\cap
\Theta_\tau)=d(y,\Theta_\tau)|\}$ is measurable).

Fix $v\in \mreals_+$ and $\tau>0$ and define the Shtarkov distribution $P_S$ on $Y^n$ with density
	\begin{equation}\label{eq:shtarkov_vt}
		p_S(y^n) = {1\over Z_{v,\tau}} \sup_{\theta \in \Theta_\tau} (2\pi v)^{-nd/2} e^{-{1\over2v}\sum_{t=1}^n \|y_t -
	\theta\|^2}\,,
\end{equation}	
where $Z_{v,\tau}$ is the normalization constant, which satisfies  $\ln Z_{v,\tau} \approx I_n + {d\over 2}$ (when $v\to 1,\tau \to 0$ and $n\to \infty$) as shown in Lemma~\ref{lem:zvt} (Appendix~\ref{app:logz}).
\begin{remark} An important point of our analysis is the following. The Shtarkov distribution (with $v=1$ and $\tau=0$) 
achieves both the $C_n$ (up to $o(1)$) and $\Gamma_n$ (exactly). However, even for $Y\simiid \calN(\theta, \sigma^2
I_d)$ with $\sigma^2 < 1$ it yields a suboptimal regret. The choice $v=1-n^{-1}$ fixes this problem and makes Shtarkov
optimal for a class of all $O(1)$-subgaussian $Y$. However, for heavy-tailed $Y$ Shtarkov remains suboptimal
(Section~\ref{sec:glm_discussion}) and we need to incorporate some robustification into the estimator. This was surprising to us.
\end{remark}

We will freely use the following change of coordinates on $y^n$. Let $V_1=\{y^n: y_1 = \ldots = y_n \in \mreals^d\}$ be a
$d$-dimensional subspace and $V_\perp$ its orthogonal complement. We denote by $y_\perp$ the orthogonal projection of
$y^n$ onto $V_\perp$. 
%	$$ y_\perp \eqdef \mathrm{Proj}_{V_\perp}(y^n)\,.$$
We then have the following convenient orthogonal decomposition:
	$$ y^n = y_{\perp} + \bar y_n \otimes \vect{1}\,, \qquad \bar y_n = {1\over n} \sum_{t=1}^n y_t $$
	where $\vect{1}$ is the all-1 vector in $\mreals^n$ and for $a\in \mreals^d$, $b\in \mreals^n$ the
	$(a\otimes b)_{m} = a_{m \mod d} b_{\lceil m/d\rceil}$ is the standard Kronecker product.
	Note also that for any function $f(y^n) = \psi(y_\perp, \bar y_n)$ we have
\begin{equation}\label{eq:m_1}
			\int_{\mreals^{nd}} f(y^n) \Leb(dy^n) = n^{d/2} \int_{\mreals^{(n-1)d}} \Leb(dx_1) \int_{\mreals^d}
		\Leb(dx_2) \psi(x_1, x_2)\,.  
\end{equation}		

It is easy to see that we have
	\begin{align} p_S(y^n) %&= {1\over Z_{v,\tau}}(2\pi v)^{-{nd/2}} e^{-{1\over 2v} \sum_{t=1}^n \|y_t - c_\tau(\bar y_n)\|^2}\\
		    &= {1\over Z_{v,\tau}}(2\pi v)^{-{nd/2}} e^{-{1\over 2v}(\|y_\perp\|^2 + n\|\bar y_n - c_\tau(\bar
		    y_n)\|^2)} \label{eq:m_9}\\
		p_\theta(y^n) &= (2\pi)^{-nd/2} e^{-{1\over 2}(\|y_\perp\|^2 + n \|\bar y_n - \theta\|^2)}\,.\label{eq:m_10}
\end{align}

Next fix $\alpha>0$ and define distribution $P_E$ on $\mreals^{nd}$ with density\footnote{Instead of $\|\bar y_n\|_1$ we
could equally well use $\|\bar y_n\|$ but then normalization constant would be more complicated. This is the only reason
for using the 1-norm.}
	$$ p_E(y^n) = (2\pi)^{-(n-1)d/2} e^{-{1\over 2}\|y_\perp\|^2} \times \left(\alpha \over 2\sqrt{n}\right)^d
	e^{-\alpha \|\bar y_n\|_1}\,,$$
where $\|v\|_1 = \sum_{j=1}^d |v_j|$ for any $v\in \mreals^d$. Using~\eqref{eq:m_1} we can check that this is indeed a
valid probability density.

Finally, fix $\lambda > 0$, set $\bar\lambda = 1-\lambda$ and define the estimator's density:
\begin{equation}\label{eq:my_estimator}
		q_{Y^n}(y^n) = \bar \lambda p_S(y^n) + \lambda p_E(y^n)\,.
\end{equation}	
We complete the proof by showing the following statement:  For every $\delta > 0$ we can select sufficiently small
$\tau,\lambda>0$ and sufficiently large $n_0$ and $\beta>0$ so that for all $n\ge n_0$ by setting 
\begin{equation}\label{eq:m_6}
	v=v_n = 1-{1\over n} + {\beta\over n^2}  
\end{equation}
we have
\begin{equation}\label{eq:m_2}
		\sup_{\theta \in \Theta} \EE_{Y_t \simiid P_Y} \left[R(Y^n; \theta)\right] \le I_n + \delta\,, \qquad \forall P_Y
\end{equation}	
where $ R(y^n; \theta) = \ln {p_{\theta}(y^n)\over q(y^n)}$, which we will upper bound as
\begin{equation}\label{eq:m_bd}
	R(y^n; \theta) \le \min(R_1(y^n; \theta), R_2(y^n; \theta))\,,
\end{equation}
where $R_1(y^n; \theta) \eqdef \ln{p_\theta(y^n)\over \bar\lambda p_S(y^n)}$ and 
$R_2(y^n; \theta) \eqdef \ln{p_\theta(y^n)\over \lambda p_E(y^n)}$.

%Our steps for the proof are the following:
%\begin{enumerate}
%\item First, we show that for any $P_Y$ such that $\EE[\|Y\|^2] = \infty$ we have $\EE[R(Y^n; \theta)] = -\infty$
%for any $\theta$. Thus, from then on we assume that the moments $\mu \eqdef \EE[Y]$ and $V \eqdef \EE[\|Y-\mu\|^2]$ are
%finite.
%\item Given $\mu$ it is easy to show that optimal $\theta$ in~\eqref{eq:m_2} is equal to $c_0(\mu)$.
%\item Consider the case $d(\mu,\Theta)\le \tau$ and show~\eqref{eq:m_2}.
%\item Consider the case $d(\mu, \Theta)> \tau$ and show~\eqref{eq:m_2}.
%\end{enumerate}

First, we show that without loss of generality we may assume that $\EE[\|Y\|^2] < \infty$. We have
$$ R_2(y^n; \theta) = \tilde b_n + \alpha \|\bar y_n\|_1 - {n\over 2}
\|\bar y_n - \theta\|^2\,, $$
where $\tilde b_n = {d\over 2} \ln {4n\over 2\pi \alpha^2} - \ln {\lambda}$. 
From~\eqref{eq:m_bd} and $\|\bar y_n\|_1 \le \sqrt{d} \|\bar y_n\|$ we get
$$ R(y^n; \theta) \le \tilde b_n + \alpha \|\theta\|_1 + \alpha \sqrt{d} \|\bar y_n - \theta\| - {n\over 2} \|\bar
Y_n - \theta\|^2\,.$$
Note that for any random vector $X$ with $\EE[\|X\|^2] = \infty$ and any constants $a,b>0$ we must have
$ \EE[a\|X\|-b\|X\|^2] = -\infty$. 
Thus, if $\EE[\|\bar Y_n\|^2] = \infty$ then the expectation in~\eqref{eq:m_2} equals $-\infty$ and there is nothing to
prove. Consequently, we assume $\EE[\|\bar Y_n\|^2] < \infty$, which  by Lemma~\ref{lem:finsum} (Appendix~\ref{app:logz}) implies 
$\EE[\|Y\|^2] < \infty$, as claimed. We denote $\mu \eqdef \EE[Y]$ and $V \eqdef \EE[\|Y-\mu\|^2]$.

For $Y$ with finite two moments we can decompose 
$$ \sup_\theta \EE[R(Y^n; \theta)] = \mathrm{const} + \sup_{\theta\in\Theta} \EE[ -{n\over2}\|\bar Y_n - \theta\|^2]\,.$$
and thus the supremum over $\theta$ is attained at $\theta = c_0(\mu)$. Fixing $\theta = c_0(\mu)$ we have
\begin{align} R_1(y^n;\theta) %&= \ln{p_\theta(y^n)\over \bar\lambda p_S(y^n)} \\
		       &= a_n + {1\over 2} \sum_t {1\over v} \|y_t - c_\tau(\bar y_n)\|^2 - \|y_t - c_0(\mu)\|^2\,,
		       	 \qquad a_n \eqdef {nd\over 2} \ln v + \ln {Z_{v,\tau}\over \bar\lambda} \nonumber\\
   R_2(y^n;\theta) &= a_n + b_n + \alpha \|\bar y_n\|_1 - {n\over 2} \|\bar y_n - c_0(\mu)\|^2\,,
   			\qquad  b_n \eqdef \tilde b_n - a_n \nonumber
\end{align}
%We have
%$$ \sup_\theta \EE[R(Y^n; \theta)] \le \EE[\min(R_1(Y^n), R_2(Y^n))] $$
We transform expression for $R_1$ using the following identities valid for arbitrary $x\in \mreals^d$:
\begin{align} %\sum_t \|Y_t - c_\tau(\bar Y_n)\|^2 - \|Y_t - c_0(\mu)\|^2 
%	&= {n\over v} \|\bar Y_n - c_\tau(\bar Y_n)\|^2 -
%				n \|\bar Y_n - c_0(\mu)\|^2 + \left({1\over v}-1\right) \sum_{t=1}^n \|Y_t-\bar
%				Y_n\|^2\\
 	\sum_t \|Y_t - x\|^2 &= n\|\bar Y_n - x\|^2 + n\hat V\,, & \hat V &\eqdef {1\over n} \sum_t \|Y_t-\bar
	Y_n\|^2\nonumber\\
   \EE\left[\|\bar Y_n - x\|^2 \right] &= {V\over n} + \|x-\mu\|^2, & \EE\left[\hat V\right] &= {n-1\over n}
   V\,,\nonumber
\end{align}
Applying these to $R_1$ we get
\begin{align} \min(R_1,R_2) &= a_n - {n\over 2} \|\bar Y_n - c_0(\mu)\|^2 + 
	\min \left( {n\over 2v} \|\bar Y_n - c_\tau(\bar Y_n)\|^2  + {n\over 2}(v^{-1}-1) \hat V, b_n + \alpha
	\|\bar Y_n\|_1\right)\nonumber\\
	&\le a_n - {n\over 2} \|\bar Y_n - c_0(\mu)\|^2 + {n\over2}(v^{-1}-1) \hat V + \tilde R_n\,, \label{eq:m_4}\\
	\tilde R_n &\eqdef \min \left( {n\over 2v} \|\bar Y_n - c_\tau(\bar Y_n)\|^2, b_n + \alpha
	\|\bar Y_n\|_1 \right) \nonumber
\end{align}	

We now consider separately $\mu \in \Theta_\tau$ and $\mu \not\in\Theta_\tau$. Suppose the former and consider 
$\|\bar y_n - \mu\|\le \tau$. Then denoting $x = c_0(\mu) + \bar y_n - \mu \in \Theta_\tau$, we have from the
definition of $c_\tau$ that 
\begin{equation}\label{eq:m_3}
	\|\bar y_n - c_\tau(\bar y_n)\| \le  \|\bar y_n - x\| = \|\mu - c_0(\mu)\|\,.
\end{equation}
Also observe that $\beta_2 = \sup_n b_n $ and $\beta_3 = \sup_{\mu \in \Theta_\tau} \|\mu\|$ are both finite. Thus, for some
$\beta_4>0$
\begin{equation}\label{eq:m_5}
	\EE[(b_n + \alpha \|\bar Y_n\|_1) 1\{\|\bar Y_n - \mu\|> \tau\}] \le \beta_4 {V\over 2n}\,.
\end{equation}
Indeed, notice that $\|\bar Y_n\|_1 \le \sqrt{d} \|\bar Y_n\|_2 \le \sqrt{d} \|\bar Y_n - \mu\|_2 + \sqrt{d} \beta_3$.
From Chebyshev we have then $ \PP[\|\bar Y_n - \mu\|\ge \tau] \le {V\over n\tau^2}$. On the other hand, 
$$ \EE[\|\bar Y_n - \mu\| 1\{\|\bar Y_n - \mu\|\ge \tau\} \le {1\over \tau} \EE[\|\bar Y_n - \mu\|^2] = {V\over
n\tau}\,.$$
Combining these two estimates yields~\eqref{eq:m_5}.

We now bound $\EE[\tilde R_n]$ by retaining the first term of the minimum if $\|\bar Y_n -\mu\|\le \tau$ (and
invoking~\eqref{eq:m_3}) and the second term otherwise. This results in a bound 
$$ \EE[\tilde R_n] \le \beta_4 {V\over n} + {n\over 2v} \|\mu - c_0(\mu)\|^2\,.$$
Plugging the latter into~\eqref{eq:m_4} we have shown that for all $\mu \in \Theta_\tau$
$$ \EE[\min(R_1(Y^n),R_2(Y^n))] \le a_n + {n\over 2} \|\mu-c_0(\mu)\|^2 (v^{-1}-1) + {1\over2} V \gamma_n\,, $$
where $ \gamma_n = (v^{-1}-1)(n-1) + {\beta_4\over n} -1$. Recall that due to~\eqref{eq:m_6} we have
\begin{equation}\label{eq:m_7}
	{1-v\over v}(n-1) = {n-1\over n} {1-\beta/n\over 1-{1\over n} + {\beta\over n^2}} \le {n-1\over n} {1-\beta/n\over
1-{1\over n}} = 1-{\beta\over n} \,.
\end{equation}
Consequently, we have $\gamma_n \le {\beta_4 - \beta\over n}$. By chosing $\beta = \beta_4$ we obtain
$$ \EE[\min(R_1(Y^n),R_2(Y^n))] \le a_n + {n\over 2} \|\mu-c_0(\mu)\|^2 (v^{-1}-1)\,.$$
Again applying~\eqref{eq:m_7} we further upper bound   ${n\over 2} (v^{-1}-1) \le 1$ for all sufficiently large $n$, and
finally obtain
$$ \EE[\min(R_1(Y^n),R_2(Y^n))] \le a_n + \|\mu-c_0(\mu)\|^2 \le a_n + \tau^2\,,$$
since $\mu \in \Theta_\tau$. The proof of~\eqref{eq:m_2} in this case is completed after noticing that 
$$ a_n = -{d\over 2} + \ln Z_{v,\tau} - \ln \bar \lambda + o(1)$$
and that by Lemma~\ref{lem:zvt} (Appendix~\ref{app:logz}), sufficiently small $\tau$ and $\lambda > 0$ yield
$ a_n + \tau^2 \le I_n + \delta $.

Next consider $\mu \not \in \Theta_\tau$. In this case
\begin{align} \EE[\|\bar Y_n\|_1]  &\le \sqrt{d} (\EE[\|\bar Y_n - \mu\|] + \|\mu - c_0(\mu)\| + \|c_0(\mu)\|) \\
			&\le \sqrt{d} (\sqrt{V\over n} + \|\mu - c_0(\mu)\| + \beta_3) 
\end{align}
Noticing that $\tilde b_n = {d\over 2} \ln n + O(1)$ and denoting $\tilde \alpha = \sqrt{d}\alpha$
we get for some large constant $\beta_5$: the bound
\begin{align*} %\sup_\theta \EE[R(Y^n; \theta)] &\le
	\EE[R_2(Y^n; \theta)] &\le {d\over 2} \ln n + \beta_5  - {V\over 2}  + \tilde \alpha \sqrt{V \over n} - {n\over 2}\|\mu - c_0(\mu)\|^2 +
	\tilde\alpha \|\mu - c_0(\mu)\|
\end{align*}	
Note that $\sup_{n,V>0}-V/2 + \tilde\alpha \sqrt{V/n} < \infty $. Hence, the third and fourth terms can be absorbed into
the second. For sufficiently large $n$ the function $-{n\over 2}t^2 + \tilde\alpha t$ is monotonically decreasing on
$t\in [\tau,\infty)$ and thus we have shown
$$ \EE[R_2(Y^n; \theta)] \le {d\over 2} \ln n + \beta_5 - {n\over 2}\tau^2 + \tilde \alpha \tau\,.$$
Clearly, for sufficiently large $n$ the right-hand side of the last inequality is $\le I_n$. This concludes the proof
of~\eqref{eq:m_2}.

	Finally, to show the claim about $q_{Y^n}$ also achieving $\Gamma_n + o(1)$ we have to only notice that
	optimal $\lambda$ in~\eqref{eq:my_estimator} tends to 0 as $n\to\infty$ and thus $\log {p_S(y^n)\over q_Y(y^n)}
	= O(\lambda) \to 0$, implying that density $q_{Y^n}$ also attains $\Gamma_n+o(1)$ regret in the individual
	sequence setting.
\end{proof}

\apxonly{Reviewer asked if the following distribution:
$$ p_S(y^n) = {1\over Z_{v,\tau} v^{d/2}}(2\pi )^{-{nd/2}} e^{-\|y_\perp\|^2/2  - {n\over 2v}\|\bar y_n - c_\tau(\bar
		    y_n)\|^2} $$
would also work. The answer is no: Consider $Y_i = c$ with $c\in \Theta$. It is clear this will
achieve $\log Z_{v,\tau}$ loss.
}

Interestingly, a similar technique can be used to show a certain curious robustness result for maximum entropy. See
Appendix~\ref{app:diff_ent} for details.

\subsection{Discussion}\label{sec:glm_discussion}

\paragraph{Suboptimality of Shtarkov's estimator.}
As we remarked above, the Shtarkov distribution (which simultaneously achieves $C_n+o(1)$ in the well-specified setting
and $\Gamma_n$ in the worst-case one),
surprisingly, is suboptimal for the distribution-free misspecified case. 

First, let us focus on the case of $d=1$ and $\Theta=[-b,b]$. By
dividing~\eqref{eq:m_10} and~\eqref{eq:m_9}, we can derive that Shtarkov's distribution ($v=1, \tau=0$) achieves:
	$$ \sup_{\theta} \EE[ \log {p_{\theta}(Y^n)\over p_S(Y^n)} ] = \log Z_{1,0} + {n\over 2} \EE[(\bar Y_n - c(\bar
	Y_n))^2 - (\bar Y_n - c(\EE[Y])^2]\,,$$
	where we defined $c(y) = c_0(y) = \sign(y){|y|\cap b}$ and assumed that $Y^n \simiid P$ with $\EE[|Y|]<\infty$.
	If we furthermore, assume $\EE[Y] \in [-b,b]$ the we get
\begin{equation}\label{eq:lr_1}
	\max_\theta \EE \log {p_\theta(Y^n)\over p_S(Y^n)} = \Gamma_n - {1\over 2} \Var[Y] + {n\over 2} \EE[(c(\bar Y_n) -
\bar Y_n)^2]\,.
\end{equation}
Notice that if $Y_j \simiid \mathcal{N}(\theta, 1)$ with $\theta \in (-b,b)$ then the third term is $o(1)$ and we
conclude that Shtarkov's distribution attains $C_n(\Theta)+o(1)$ in the well-specified case\footnote{To handle
$\theta=\pm b$, we need to consider Shtarkov for an infinitesimally enlarged domain $[-b-\tau,b+\tau]$. In the case of
mismatched variance, i.e. when $\Var[Y] < 1$, we need to set $v=1-{1\over n}$ instead of $v=1$.}. However, in the
misspecified case the situation is different. Consider, the following heavy-tailed distribution 
	\begin{equation}\label{eq:lr_x}
		Y = \begin{cases} 0, & \text{w.p.~} 1-{1\over 4 b^2 n^2}\\
			 \pm 2bn, & \text{w.p.~} {1\over 8 b^2 n^2}
		\end{cases} 
\end{equation}		
This has $\EE[Y]=0$ and $\Var[Y]=1$, but notice the following issue. We have that with probability $\approx
{\mathrm{const}\over n}$
out of $n$ iid samples exactly one is going to take the value $2bn$, achieving $\bar Y_n = 2b$. Thus, while $\EE[Y] = 0$
we have $\PP[\bar Y_n - c(\bar Y_n) \ge b] \ge {\mathrm{const}\over n}$ and, therefore,
	$ {n\over 2} \EE[(c(\bar Y_n) - \bar Y_n)^2] \ge \mathrm{const}$.
Consequently,
	$$ \sup_{P \in \Phi}\max_\theta \EE \log {p_\theta(Y^n)\over p_S(Y^n)} \ge C_n(\Theta) + \mathrm{const} +
	o(1)\,,$$
implying that Shtarkov distribution does not achieve the optimal value of $F_n^{(PAC)} = C_n + o(1)$.

\paragraph{Suboptimality of Bayes estimator based on Jeffreys prior.}
Next, we want to show that the Jeffreys prior based estimator also does not achieve $F_n^{(PAC)}$. 
We consider the following density 
\begin{equation}\label{eq:m_11}
	p_J(y^n) = {1\over 2b} \int_{-b}^b d\theta p_\theta(y^n) = {1\over 2b\sqrt{n}} (2\pi)^{-(n-1)/2} e^{-{1\over 2}\|y_\perp\|^2} q_2(\bar y_n)\,,
\end{equation}
where we applied~\eqref{eq:m_10} and denoted the single-variate function
$$ q_2(y) = \PP[|y+ G_n| < b], \qquad G_n \sim\mathcal{N}(0,1/n)\,.$$
Dividing~\eqref{eq:m_10} by~\eqref{eq:m_11} we obtain :
	$$ \sup_{\theta} \EE[ \log {p_{\theta}(Y^n)\over p_J(Y^n)} ] =  \log {2b\sqrt{n}\over \sqrt{2\pi}} - \inf_\theta 
	\EE[{n\over 2}(\bar Y_n - \theta)^2 - \log q_2(\bar Y_n)]\,,$$
	or assuming that $\EE[Y] \in [-b,b]$ and $\Var[Y]=1$ we get (cf.~\eqref{eq:in_def})
	\begin{equation}\label{eq:m_12}
		\sup_{\theta} \EE[ \log {p_{\theta}(Y^n)\over p_J(Y^n)} ] = I_n  - \EE[\log q_2(\bar Y_n)]\,.
\end{equation}	
	When $Y_j \simiid \mathcal{N}(\theta,1)$ (or any subgaussian distribution), we have $q_2(\bar Y_n) =
	1-e^{-\Omega(n)}$ with high probability and thus, the last term in~\eqref{eq:m_12} is $o(1)$ and we see that
	indeed Jeffreys prior estimator achieves $C_n + o(1)$ regret in the well-specified case.

	However, when $Y_j\simiid P$ with distribution~\eqref{eq:lr_x} the problem occurs. As we argued above, with
	probablity $\ge {\mathrm{const}\over n}$ we get $\bar Y_n = 2b$, which implies $-\log q_2(\bar Y_n) =
	\Omega(n)$, and in turn  $\EE[-\log q_2(\bar Y_n)] >c > 0$. Hence, the regret of $p_J$ is suboptimal as well.

\paragraph{The nature of the gap between $F_n^{(PAC)}$ and $\Gamma_n$.} As a final remark, we
discuss the meaning of $F_n^{(PAC)} < \Gamma_n$. First, it is easy to show that if we take
$\Phi_n=\{\mbox{exchangeable distributions on~}\calY^n\}$ in~\eqref{eq:f_def} then the resulting
regret $F_n = \Gamma_n$.  One may be tempted to infer from this that
the PAC-optimal estimator is able to somehow exploit the inherent structure of the iid data (even adversarially
generated) and improve prediction compared  to the worst-case / exchangeable data. 
We argue, however, that it is not the quality of the estimator that deteriorates upon
relaxing the iid assumption, but rather the quality of the oracle predictor increases.  

More exactly, notice that when supremum
in~\eqref{eq:f_def} is evaluated over all (or all exchangeable) distributions, then the maximum is
attained at the extremal point, corresponding to a single known sequence $y^n$ (or its
permutations) as evidenced by~\eqref{eq:g_def}. This, in turn, gives the oracle more freedom as it
can adapt to the realization $Y^n$ as opposed to just the
distribution of it. This extra freedom is what results in the ${d\over 2}$ increase in regret, not
an iid structure per se. Indeed, let us redefine the regret in a well-specified case as follows:
$$
	\bar C_n(\Theta) \eqdef \inf_Q \sup_{\theta} \EE_{Y^n \simiid P_{\theta}} \left[
	\sup_{\theta^*} \log {dP_{\theta^*}\over
dQ}(Y^n)\right]\,,
$$
corresponding to oracle-estimator $P_{\theta^*}^{\otimes n}$ that is chosen given  the knowledge of $Y^n$. 
In Appendix~\ref{app:barcn} we argue that in most cases (and certainly in the GLM model of Theorem~\ref{th:glm}) we have
\begin{equation}\label{eq:barcn}
	\bar C_n(\Theta) = \Gamma_n + o(1)\,.
\end{equation}
This demonstrates that even in the well-specified case if we give oracle the power to adapt to
realization we do get the same regret as $\Gamma_n$, thus clarifying the nature of the gap between
$F_n^{(PAC)}$ and $\Gamma_n$.

\def\acktext{
We would like to thank Olivier Catoni for useful comments, Peter Gr\"unwald for
pointing out~\cite{takeuchi1998robustly}, and Alexander Rakhlin for useful discussions. We thank
anonymous reviewers for observations regarding exchangeable distributions in
Section~\ref{sec:glm_discussion}. The work of MF was partially supported by a grant from the
Israeli Science Foundation (ISF) Grant number 819/20.
The work of YP was supported in part
by the Center for Science of Information (CSoI),
an NSF Science and Technology Center, under grant agreement CCF-09-39370, NSF Grant ECCS-1808692, and the MIT-IBM Watson AI Lab.
}

\ifarxiv
\section*{Acknowledgement} 
\acktext
\bibliographystyle{plain}

\else
\acks{\acktext}
\fi

\appendix

\section{Technical remarks on the definition of $F(\Theta,\Phi)$}\label{app:tech_rem}

Unlike the well-specified case, where definition~\eqref{eq:cap_def} is elegant and rigorous, the regret in the
misspecified case, the $F_n$ and $\Gamma_n$, is more subtle. We list some of the issues in this section.

%\textbf{Technical remarks:} 
%\begin{enumerate} 
%\item 
In~\eqref{eq:f_def} and below we adopt the following rules for evaluting $\log$ of the ratio: For any $c>0$ we set
\begin{equation}\label{eq:log_inf}
	\log {0\over c} = -\infty, \quad \log {c\over 0} = +\infty, \quad \log {0\over 0} = 0\,.
\end{equation}

%\item 
In the infimum over densities $q$ in~\eqref{eq:f_def} we only consider \textit{admissible} $q$, i.e. those $q$ 
such that for every $P\in \Phi$ and every $\theta \in \Theta$ the expectation of the
log-likelihood ratio is well-defined (but possibly infinite)\footnote{Recall
that in Lebesgue integration theory we set $\EE[X] =
\EE[\min(X,0)]+\EE[\max(X,0)]$ unless the two summands are $-\infty$ and
$+\infty$, in which case the expectation is undefined.} If no such $q$ exist
then we take $F=\infty$.

A crucial subtlety concerning quantities $F_n$ and $\Gamma_n$ is that they are
not necessarily functions of the distributions $\{P_\theta\}$.  Rather they
depend on the particular chosen representative densities $f_\theta$. 
In some cases, corresponding to the same family $P_\theta$ one can
choose densities $f_\theta$ such that $\Gamma_n$ is increased by an arbitrary
amount. (One example is to take $P_\theta = \Unif[\theta, 1+\theta]$ and
compare densities $f_\theta(x)= 1\{\theta \le x \le 1+\theta\}$ with
$f_\theta(x) = 1\{\theta < x \le 1+\theta\} + 1\{x=\theta\}$.) For this reason,
we write $C_n(\{P_\theta\})$ but $\Gamma_n(\{f_\theta\}, \mu)$.\footnote{This suggests that perhaps a more 
sensible definition of $\Gamma_n$ would be to replace $\sup_\theta f_\theta(y^n)$ with the
definition of supremum common in the theory of Banach lattices~\cite[Section 2.6]{meyer2012banach}, namely
$\textrm{bsup}_\theta f_\theta(y^n)$ is defined as any function $g$ such that $f_\theta \le g$ ($\mu$-a.e. for all
$\theta$) and if $f_\theta \le h$ ($\mu$-a.e. for all $\theta$), 
then $g\le h$ ($\mu$-a.e.).  The advantages of $\mathrm{bsup}$ are that a) $\Gamma_n$ becomes insensitive to 
$\mu$-negligible modifications of $f_\theta$'s, and b)
$\mathrm{bsup}_\theta f_\theta(y^n)$ is automatically measurable if $\mu$ is $\sigma$-finite, cf~\cite[Lemma 2.6.1]{meyer2012banach}.}

Similarly, whereas for the well-specified case we can think of predictor as submitting at each step 
a distribution $Q_t(\cdot | Y^{t-1})$, in the misspecified case we insist that the predictor submits a density $q_t(\cdot|y^{t-1})$. 

Finally, note that none of these difficulties apply to the case when $\calY$ is countable.

\section{Technical results for Section~\ref{sec:glm}}\label{app:logz}

\begin{lemma}\label{lem:zvt} For any compact $\Theta$ with $\Leb(\Theta)>0$ we have 
	$$ \ln Z_{v,\tau} = {d\over 2} \ln {n\over 2\pi v} + \ln \Leb(\Theta_\tau) + o(1)\,,$$
	where $Z_{v,\tau}$ is the normalization constant from~\eqref{eq:shtarkov_vt}.
	Furthermore, as $\tau\to 0$ we have $\Leb(\Theta_\tau) \to \Leb(\Theta)$.
\end{lemma}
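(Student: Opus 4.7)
The plan is to unroll the definition of $Z_{v,\tau}$ using the orthogonal decomposition $y^n = y_\perp + \bar y_n \otimes \mathbf{1}$ already introduced in the proof of Theorem~\ref{th:glm}, reducing a high-dimensional Gaussian integral to a $d$-dimensional one over $\bar y_n$, then show that the resulting $d$-dimensional integral is $\Leb(\Theta_\tau) + o(1)$.

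First I would write
\[
 Z_{v,\tau} = \int_{\mathbb{R}^{nd}} (2\pi v)^{-nd/2} e^{-\frac{1}{2v}\|y_\perp\|^2} \, e^{-\frac{n}{2v}\|\bar y_n - c_\tau(\bar y_n)\|^2}\, \Leb(dy^n),
\]
using the identity $\sum_t\|y_t-\theta\|^2 = \|y_\perp\|^2 + n\|\bar y_n-\theta\|^2$ and the definition of $c_\tau$. Applying the change of variables~\eqref{eq:m_1}, the $y_\perp$ integral is an explicit Gaussian equal to $(2\pi v)^{(n-1)d/2}$, and after canceling powers of $(2\pi v)$ one obtains
\[
 Z_{v,\tau} = \left(\frac{n}{2\pi v}\right)^{d/2} \int_{\mathbb{R}^d} e^{-\frac{n}{2v} d(x,\Theta_\tau)^2}\, \Leb(dx),
\]
since $\|x-c_\tau(x)\| = d(x,\Theta_\tau)$.

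The remaining step is to show $\int_{\mathbb{R}^d} e^{-\frac{n}{2v} d(x,\Theta_\tau)^2}\, dx = \Leb(\Theta_\tau) + o(1)$. Splitting the integral at $\partial \Theta_\tau$, the contribution from $x\in\Theta_\tau$ is exactly $\Leb(\Theta_\tau)$. For the tail piece I would use the tube/coarea bound: since $\Theta_\tau$ is compact, the $(d-1)$-dimensional Minkowski content of $\{x: d(x,\Theta_\tau) = s\}$ is $O((1+s)^{d-1})$ for large $s$, so
\[
 \int_{\Theta_\tau^c} e^{-\frac{n}{2v}d(x,\Theta_\tau)^2}\,dx \;\lesssim\; \int_0^\infty e^{-\frac{n}{2v}s^2}(1+s)^{d-1}\,ds \;=\; O(n^{-1/2}),
\]
as one sees by substituting $s = t\sqrt{v/n}$. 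Taking logarithms gives the stated expansion. The main (minor) obstacle is making the tube bound rigorous for a general compact $\Theta_\tau$; a clean way is to note that $\Theta_\tau$ lies in a fixed ball $B_R$, so $\{d(\cdot,\Theta_\tau)\le s\} \subset B_{R+s}$, yielding the same polynomial-in-$s$ bound for the tail volume.

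For the last claim, $\Leb(\Theta_\tau)\to \Leb(\Theta)$ as $\tau\downto 0$ follows from downward continuity of Lebesgue measure: the family $\{\Theta_\tau\}_{\tau>0}$ is decreasing with $\Leb(\Theta_\tau)<\infty$ (as $\Theta_\tau$ is bounded) and $\bigcap_{\tau>0}\Theta_\tau = \Theta$ because $\Theta$ is closed (compact). Hence $\Leb(\Theta_\tau)\downto \Leb(\Theta)$, completing the proof.
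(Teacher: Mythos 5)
Your reduction of $Z_{v,\tau}$ to the $d$-dimensional integral $\bigl(\tfrac{n}{2\pi v}\bigr)^{d/2}\int_{\mreals^d}e^{-\frac{n}{2v}d(x,\Theta_\tau)^2}dx$ is exactly the paper's (its Eq.~\eqref{eq:glm_shtar}), and your proof of $\Leb(\Theta_\tau)\to\Leb(\Theta)$ via continuity from above is correct and equivalent to the paper's dominated-convergence argument on indicators. The divergence is in the final step, $\int e^{-\alpha\, d(x,\Theta_\tau)^2}dx\to\Leb(\Theta_\tau)$, and there your argument has a genuine gap. The primary route you propose rests on the bound $\mathcal{H}^{d-1}(\{d(\cdot,\Theta_\tau)=s\})=O((1+s)^{d-1})$ for a \emph{general} compact set; you flag this yourself, and it is not minor -- level sets and boundaries of neighborhoods of arbitrary compact sets require nontrivial geometric measure theory (Kneser-function arguments), and the lemma is stated for any compact $\Theta$, including $\tau=0$. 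More importantly, the fallback you offer does not close the gap: the containment $\{d(\cdot,\Theta_\tau)\le s\}\subset B_{R+s}$ controls the \emph{total} volume $V(s)$ by $c_d(R+s)^d$, but the tail integral is governed by $V(s)-V(0)$, and the layer-cake computation with this crude bound yields a contribution of order $c_dR^d-\Leb(\Theta_\tau)$, i.e.\ $O(1)$, not $O(n^{-1/2})$ or even $o(1)$. The containment alone does not know that $V(s)\to V(0)$ as $s\to0$.

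Two clean fixes. The paper's: since $\Theta_\tau\subset B_r$, one has $d(z,\Theta_\tau)\ge(\|z\|-r)_+$, so for $\alpha\ge1$ the integrand is dominated by the integrable function $e^{-(\|z\|-r)_+^2}$, while pointwise $e^{-\alpha\, d(z,\Theta_\tau)^2}\to\indc{z\in\Theta_\tau}$ because $\Theta_\tau$ is closed; dominated convergence gives the limit $\Leb(\Theta_\tau)$ with no tube geometry at all. Alternatively, your own structure can be repaired by writing $V(s)-V(0)=\Leb(\Theta_{\tau+s})-\Leb(\Theta_\tau)$ and invoking the second part of the lemma (which you already proved) to get $V(s)-V(0)\to0$ as $s\downto0$, then dominating in the layer-cake integral; this yields $o(1)$ -- which is all the lemma needs -- but not the rate $O(n^{-1/2})$ you claim, and that rate should be dropped absent regularity assumptions on $\Theta$.
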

\begin{proof}
Let us define a function $d(z, B) = \inf_{z'\in B} \|z-z'\|$ which is continuous for any set $B$. We have
$$ \Leb(\Theta_\tau) = \int_{\mreals^d} 1\{d(z, \Theta) \le \tau\} dz\,.$$
Notice that as $\tau\to 0$ the sequence of functions $1\{d(z, \Theta) \le \tau\}$ converges pointwise to
$1\{z\in\Theta\}$. By compactness of $\Theta$, the set $\Theta_\tau$ is bounded and hence we have from the dominated
convergence $\Leb(\Theta_\tau) \to \Leb(\Theta)$ as $\tau \to 0$. This proves the second assertion.

To prove the first assertion we apply identity~\eqref{eq:m_1} to~\eqref{eq:m_9} to get
\begin{equation}\label{eq:glm_shtar}
	Z_{v,\tau} = \left({2\pi v\over n}\right)^{-d/2} \int_{\mreals^d} e^{-\alpha d(z, \Theta_{\tau})^2} dz\,, \quad
\alpha \eqdef {n\over 2v}\,.
\end{equation}
From here the result follows since for any bounded set $B$ we have
$$ \int e^{-\alpha d(z,B)^2} dz \to \Leb(B),\qquad \alpha \to \infty\,.$$
Indeed, since $B$ is bounded we can include it into the ball of radius $r$ and thus $d(z,B) \ge \left(\|z\|-r\right)_+$
for some finite $r>0$. Since $e^{-\left(\|z\|-r\right)_+^2}$ is integrable over $\Leb$ in $\mreals^d$, it  dominates all
integrands for $\alpha \ge 1$ and we have from the
dominated convergence theorem
	$$ \lim_{\alpha \to \infty} \int e^{-\alpha d(z,B)^2} dz  = \int \lim_{\alpha \to \infty} e^{-\alpha d(z,B)^2}
	dz = \Leb(B)\,.$$
\end{proof}

\begin{lemma}\label{lem:finsum} Let $X_i$ be iid random variables and let $\bar X_n = {1\over n} \sum_{t=1}^n X_t$. Then
$\EE[\bar X_n^2] < \infty$ iff $\EE[X_1^2] < \infty$.
\end{lemma}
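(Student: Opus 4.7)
The plan is to prove the two directions separately. The implication $\EE[X_1^2] < \infty \Rightarrow \EE[\bar X_n^2] < \infty$ is the easy one, following from Cauchy--Schwarz (or Jensen): $\bar X_n^2 \le \frac{1}{n}\sum_{t=1}^n X_t^2$, so $\EE[\bar X_n^2] \le \EE[X_1^2] < \infty$.

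The nontrivial direction $\EE[\bar X_n^2] < \infty \Rightarrow \EE[X_1^2] < \infty$ I would establish by a tail-comparison argument, showing that the tail of $|\bar X_n|$ is lower-bounded (up to constants depending on $n$) by the tail of $|X_1|$. The key observation is that for a single large $|X_i|$ accompanied by small values of the remaining coordinates, the sum $n\bar X_n$ inherits the large magnitude. Concretely, since $|X_1|<\infty$ almost surely, pick $t_0>0$ with $\PP[|X_1|>t_0]\le \frac{1}{2n}$. For every $t\ge t_0$ define the disjoint events
\[
A_i = \{|X_i|>2nt\} \cap \bigcap_{j\neq i}\{|X_j|\le t_0\},\qquad i=1,\dots,n.
\]
On $A_i$ the triangle inequality yields $|n\bar X_n| \ge |X_i| - \sum_{j\neq i}|X_j| \ge 2nt-(n-1)t_0 \ge nt$, hence $|\bar X_n|>t$. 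By independence, $\PP[A_i]=\PP[|X_1|>2nt]\,(1-\PP[|X_1|>t_0])^{n-1}\ge c_n\, \PP[|X_1|>2nt]$ with $c_n\eqdef(1-\tfrac{1}{2n})^{n-1}>0$, so
\[
\PP[|\bar X_n|>t] \;\ge\; n\, c_n\, \PP[|X_1|>2nt]\qquad\text{for all } t\ge t_0.
\]

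Now I would convert the tail bound to a second-moment bound via the layer-cake identity $\EE[Y^2]=2\int_0^\infty t\,\PP[|Y|>t]\,dt$. Applying this and substituting $u=2nt$ gives
\[
\EE[\bar X_n^2]\;\ge\; \frac{c_n}{2n}\int_{2nt_0}^\infty u\,\PP[|X_1|>u]\,du.
\]
Since $\int_0^\infty u\,\PP[|X_1|>u]\,du = \tfrac{1}{2}\EE[X_1^2]$, the assumption $\EE[X_1^2]=\infty$ forces the right-hand side to diverge, contradicting $\EE[\bar X_n^2]<\infty$. Hence $\EE[X_1^2]<\infty$, completing the proof.

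There is essentially no serious obstacle; the only care needed is in the choice of $t_0$ so that the truncation event $\{|X_j|\le t_0,\ j\neq i\}$ still has probability bounded away from zero uniformly in $t$, which is guaranteed once $\PP[|X_1|>t_0]\le 1/(2n)$. The argument extends verbatim to vector-valued $X_i$ (as used in Appendix B, where $\|Y\|$ plays the role of $|X_1|$) by replacing absolute values with norms and invoking the triangle inequality in $\mreals^d$.
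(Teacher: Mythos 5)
Your proof is correct, but it takes a genuinely different route from the paper. The paper handles the nontrivial direction through characteristic functions: since $\EE[\bar X_n^2]<\infty$, the characteristic function $f(t)=\Psi(t/n)^n$ of $\bar X_n$ is twice continuously differentiable; choosing an analytic branch $f_1$ of the $n$-th root near $z=1$ gives $\Psi(t)=f_1(f(nt))$ near $t=0$, so $\Psi$ is twice differentiable at the origin, which by the classical criterion forces $\EE[X_1^2]<\infty$. You instead argue by a truncation/tail-comparison: on the disjoint events $A_i=\{|X_i|>2nt\}\cap\bigcap_{j\neq i}\{|X_j|\le t_0\}$ the average inherits the single large value, giving $\PP[|\bar X_n|>t]\ge n\,c_n\,\PP[|X_1|>2nt]$ for $t\ge t_0$, and the layer-cake identity then transfers divergence of $\int u\,\PP[|X_1|>u]\,du$ to $\EE[\bar X_n^2]$. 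The details check out: disjointness of the $A_i$ holds because $2nt>t_0$ when $t\ge t_0$, the triangle-inequality bound $2nt-(n-1)t_0\ge nt$ is valid in that range, and the choice $\PP[|X_1|>t_0]\le \tfrac{1}{2n}$ keeps the truncation probability uniformly bounded below. Your argument is more elementary and self-contained (no complex-analytic branch selection, no appeal to the fact that a second derivative of the characteristic function at $0$ implies a finite second moment), it yields an explicit quantitative tail inequality, and it generalizes immediately to vector-valued variables and to $p$-th moments; the paper's argument is shorter if one takes the standard characteristic-function facts for granted. Note also that for the vector application in the paper one can alternatively just apply the scalar lemma coordinatewise, so your remark about the vector extension, while true, is not strictly needed.
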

\begin{proof}
We only need to prove that $\EE[(\bar X_n)^2] < \infty$ implies $\EE[(X_1)^2] < \infty$. To that end, let $\Psi(t) = \EE[e^{itX_1}]$ be the characteristic
function of $X_1$. From the conditions we know that the complex-valued function $f(t) = \Psi(t/n)^n$ is twice
continuously differentiable for all $t\in \mreals$ and $f(0)=1$. In a small neighborhood of $z=1$ on
the complex plane there exists an analytic function $f_1(z)$ satisfying $f_1(z)^n = z$. We have then $\Psi(t) =
f_1(f(nt))$ in a small neighborhood of $t=0$. In particular, $\Psi(t)$ is differentiable at $0$ and thus $X_1$ has a finite
second moment.
\end{proof}

\section{Robustness of maximum differential entropy}\label{app:diff_ent}
 For any
random variable $X$ with pdf $f_X$ let us define $h(X) = -\EE[\log f_X(X)]$ to be its differential entropy. It is well
known that 
	$$ \sup \{h(X): X\in[-b,b]\} = \log (2b)\,,$$
showing that uniform distribution has maximal entropy among all distributions with a given support. The following shows
a certain kind of robustness result.

\begin{lemma} As $\epsilon \to 0+$ we have
	$$ \sup \{h(M+Z): M \in [-b,b], \EE[Z]=0, \Var[Z] \le \epsilon\} = \log (2b) + O(\epsilon^{1/3} \log {1\over
	\epsilon})\,.$$
	where in the supremization we do not require $M$ and $Z$ be independent, but do require the distribution of
	$M+Z$ to have density.
\end{lemma}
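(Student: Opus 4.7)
The lower bound is immediate: taking $M \sim \Unif[-b,b]$ and $Z\equiv 0$ produces an admissible pair with $h(M+Z) = \log(2b)$. For the upper bound, fix any admissible triple with $X = M+Z$ possessing a density $f_X$, and write $\sigma^2 = \Var[Z] \le \epsilon$. The plan is to show that $X$ is essentially supported on a slightly enlarged interval $A = [-b-\delta, b+\delta]$ and to carefully account for the (tiny) leakage into $A^c$, then to optimize the free parameter $\delta$.

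First I would establish tail concentration. Since $|X| \le |M| + |Z| \le b + |Z|$, Chebyshev gives
\begin{equation*}
p \;\eqdef\; \Prob[X \notin A] \;\le\; \Prob[|Z| > \delta] \;\le\; \epsilon/\delta^2.
\end{equation*}
Decomposing the entropy via conditioning on the event $\{X\in A\}$,
\begin{equation*}
h(X) \;=\; (1-p)\, h(f_A) \;+\; p\, h(f_{A^c}) \;+\; H_2(p),
\end{equation*}
where $f_A$ and $f_{A^c}$ are the conditional densities and $H_2(p) = -p\log p - (1-p)\log(1-p)$ is the binary entropy.

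Next I would bound each of the three pieces. The bounded-support bound gives $h(f_A) \le \log(2(b+\delta)) \le \log(2b) + \delta/b$. For the tail, the Gaussian upper bound $h \le \tfrac12\log(2\pi e \Var)$ combined with
\begin{equation*}
\Var(f_{A^c}) \;\le\; \EE[X^2]/p \;\le\; (2b^2 + 2\epsilon)/p
\end{equation*}
(which uses $\EE[X^2] \le 2\EE[M^2] + 2\EE[Z^2] \le 2b^2 + 2\epsilon$) yields $p\, h(f_{A^c}) \le O(p) + \tfrac{p}{2}\log(1/p)$. Also $H_2(p) = O(p\log(1/p))$. Summing,
\begin{equation*}
h(X) - \log(2b) \;\le\; \delta/b \;+\; O\!\left(p \log (1/p)\right).
\end{equation*}

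Finally I would optimize: choosing $\delta = \epsilon^{1/3}$ gives $p \le \epsilon^{1/3}$, so both $\delta/b$ and $p\log(1/p)$ are of order $\epsilon^{1/3}\log(1/\epsilon)$, yielding the claimed rate. The main obstacle is controlling the tail contribution $p\cdot h(f_{A^c})$: a priori the conditional entropy on the small-probability set $A^c$ could be arbitrarily large, and no support-based bound works there. The key observation that unlocks the proof is that the second moment of $X$ is uniformly bounded by $2b^2 + 2\epsilon$, forcing $\Var(f_{A^c}) = O(1/p)$; after weighting by $p$, the Gaussian maximum-entropy bound then contributes only an $O(p\log(1/p))$ term that is negligible at the optimal $\delta$.
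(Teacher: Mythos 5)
Your proof is correct, and it takes a genuinely different route from the paper's. The paper bounds $h(M+Z)\le -\EE[\log q(M+Z)]$ for an explicit reference density $q$ that mixes a uniform on the slightly enlarged interval $[-b-\tau,b+\tau]$ with a Laplace component $\tfrac{\lambda\alpha}{2}e^{-\alpha|x|}$, then splits the expectation on $\{|Z|\le\tau\}$ versus $\{|Z|>\tau\}$; the Laplace tail means only a \emph{first}-moment bound $\EE[|M+Z|\,1\{|Z|>\tau\}]\le bp+\epsilon/\tau$ is needed, and the choice $\lambda=\tau=\epsilon^{1/3}$ gives the rate. (This mirrors the robustified mixture estimator~\eqref{eq:my_estimator} of Theorem~\ref{th:glm}, which is partly the point of including the lemma.) You instead use the exact decomposition $h(X)=(1-p)h(f_A)+p\,h(f_{A^c})+H_2(p)$ after conditioning on $X\in A=[-b-\delta,b+\delta]$, control the body by the bounded-support bound, and control the tail by the Gaussian maximum-entropy bound together with the uniform second-moment bound $\EE[X^2]\le 2b^2+2\epsilon$, so that $\Var(f_{A^c})\le (2b^2+2\epsilon)/p$ and $p\,h(f_{A^c})=O(p\log(1/p))$; both arguments use the same Chebyshev estimate $p\le\epsilon/\delta^2$ and the same $\delta\asymp\epsilon^{1/3}$ optimization, and both yield $O(\epsilon^{1/3}\log\tfrac1\epsilon)$. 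Your version is more self-contained (only standard max-entropy facts, no choice of reference density), at the cost of invoking a second-moment rather than first-moment tail control and of some care with the decomposition when conditional entropies are $-\infty$ (harmless here, since the body term is always bounded above and the tail term is bounded above by the Gaussian bound, so any degeneracy only lowers $h(X)$); the paper's version is a one-line Gibbs-inequality bound against a fixed $q$ and ties directly into the paper's broader theme of heavy-tail robustification of mixture predictors.
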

\begin{proof} Since for any density $q$ and $dQ = q d\Leb$ we have $D(P_{M+Z}\|Q)\ge 0$, or in other words
	\begin{equation}\label{eq:l1}
		h(M+Z) \le -\EE[\log q(M+Z)]\,,
\end{equation}	
	Similar to the proof of Theorem~\ref{th:glm} we make the following choice ($\tau,\lambda,\alpha>0$ are to be
	chosen later):
	$$ q(x) = \bar \lambda {1\over 2(b+\tau)} 1\{|x| \le b+\tau\} + {\lambda \alpha\over 2} e^{-\alpha|x|}\,.$$
	The analysis of the RHS of~\eqref{eq:l1} proceeds similar to the proof of Theorem~\ref{th:glm}: 
	we consider two cases separately: $|Z| \le \tau$ and $|Z|>\tau$ and bounding $\log {1\over q(M+Z)}$ accordingly
	on each. Indeed, we have
	$$ \EE[1\{|Z|\le \tau\} \log {1\over q(M+Z)}] \le \log {2(b+\tau)\over 1-\lambda}\,.$$
	For the other case, denote $p=\PP[|Z|>\tau]$ and notice that $\EE[|M+Z| 1\{|Z|>\tau\}] \le bp + {1\over
	\tau}\EE[Z^2] \le bp + {\epsilon \over \tau}$ yielding
	$$ \EE[1\{|Z|> \tau\} \log {1\over q(M+Z)}] \le p (\log {2\over \lambda \alpha} + \alpha b) +
	{\alpha\epsilon\over \tau} \,.$$
	Taking here $\alpha=1$, $\lambda = \tau = \epsilon^{1/3}$ we obtain the result.
\end{proof}

\section{Justification of~\eqref{eq:barcn}}\label{app:barcn}

Recall that we are interested in checking
$$ \bar C_n(\Theta) \eqdef \inf_Q \sup_{\theta} \EE_{Y^n \simiid P_{\theta}} \left[ \sup_{\theta^*} {dP_{\theta^*}\over
dQ}(Y^n)\right] \stackrel{?}{=} \Gamma_n + o(1)$$
under some regularity assumptions (say smooth finite-parameter families). 
To that end note that the inner optimization can be solved explicitly yielding the Shtarkov density $P_S = \sup_\theta
p_\theta(y^n) \exp{-\Gamma_n}$. With this in mind we obtain:
\begin{align*} \bar C_n(\Theta) &= 
	\Gamma_n + \inf_Q \sup_{\theta} \EE_{Y^n \simiid P_\theta} \left[ \log {p_S(y^n)\over q(Y^n)}\right]\\
	&= 
	\Gamma_n + \inf_Q \sup_{\theta} D(P_\theta^{\otimes ^n}\|Q_{Y^n}) - D(P_\theta^{\otimes n}\| P_S)\,.
\end{align*}	

Now extending the inner supremum to supremum over priors on $\theta$ and lower-bounding $\inf \sup \ge \sup
\inf$\footnote{In
fact, there is always equality $\inf \sup = \sup \inf$, argued same as for~\eqref{eq:capred}, but we do not need this
extension.}, we get 
$$ \bar C_n(\Theta) \ge \Gamma_n + \sup_{\pi} I(\theta; Y^n) - \EE_{\theta \sim \pi}[D(P_\theta\|P_S)]\,.$$
Notice also that $I(\theta; Y^n) - \EE_{\theta \sim \pi}[D(P_\theta\|P_S)] = -D(P_\pi \|P_S)$, where we denoted $P_\pi =
\int \pi(d\theta) P_\theta^{\otimes n}$. In other words, we have shown that
	$$ 0 \le \Gamma_n - \bar C_n(\Theta) \le \inf_\pi D(P_\pi \|P_S)\,,$$
i.e. the gap between the two corresponds to how well the Shtarkov distribution can be approximated by a Bayes mixture.
In the GLM case, a simple explicit computation taking $\pi$ to be uniform on $[-b,b]$ (so that $P_\pi = P_J$
in~\eqref{eq:m_11}) yields 
	$$ \Gamma_n - \bar C_n(\Theta) = O(1/\sqrt{n})\,.$$
(See also~\cite[Appendix to Chapter 8]{grunwald2007minimum}.)

\section{Proof of Theorem~\ref{th:exist}}\label{app:exist}

The idea of the proof is best illustrated by looking at the heuristic derivation~\eqref{eq:fn_mi}, which concluded with
\begin{equation}\label{eq:i2}
		F(\Theta,\Phi) = \sup_{\pi} I(P; Y) - \EE_{\pi}[c(P)]\,,
\end{equation}	
	where supremum is taken over all prior distributions $\pi$ on $\Phi$, and the joint distribution of $P,Y$ is given
	by
	$$ \PP[P = a, Y=b] = \pi(a) P(b)\,.$$
	The expression~\eqref{eq:i2} is just a (Langrangian version of the ) cost-constrained channel capacity
	calculation.\footnote{Incidentally, this point of view also suggests a non-trivial method for finding the 
	optimal $Q^*$: the Blahut-Arimoto algorithm.} 
	Thus, a theorem of Kemperman~\cite{kemperman1974shannon} implies that if we take any sequence
	$\pi_k$ of priors attaining supremum in~\eqref{eq:i2}, the sequence of corresponding induced distributions
	$P_{Y_k}$ converges, in the sense that $D(P_{Y_k} \| Q^*)\to0$ (and hence in total variation), with limit $Q^*$
	independent of the sequence $\pi_k$. This unique $Q^*$ is what also achieves optimality of $F(\Phi,\Theta)$.

	Unfortunately, the argument above is very informal. The function $P \mapsto c(P)$ may not be measurable, the
	stochastic transformation taking element $P \in \Phi$ and outputting a random element $Y\sim P$ may not be a Markov
	kernel, etc. We proceed, thus, in a rather different way.

\begin{lemma}\label{lem:ddif} Let $P,Q,R\ll \mu$ and $f_P,f_Q,f_R$ denote their densities. Then (with $\log$ of the
ratio evaluated according to~\eqref{eq:log_inf})
	\begin{equation}\label{eq:ddif}
		\EE_{P} \left[\log {f_R\over f_Q}\right] = D(P\|Q) - D(P\|R)\,,
\end{equation}	
	whenever not both divergences are infinite. 
\end{lemma}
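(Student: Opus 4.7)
The result should follow from the pointwise algebraic identity
$$
\log \frac{f_R}{f_Q} \;=\; \log \frac{f_P}{f_Q} \;-\; \log \frac{f_P}{f_R},
$$
plus linearity of the Lebesgue integral. The only work is justifying this decomposition in view of the extended conventions in~\eqref{eq:log_inf}, so that the identity holds pointwise on a set of full $P$-measure and the splitting of the expectation does not produce an indeterminate form $+\infty-\infty$.

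\textbf{Step 1: reduction to $\{f_P>0\}$.} Since $\EE_P[g]=\int g\, f_P\, d\mu$ (with $0\cdot(\pm\infty)=0$), only the restriction of $g$ to $\{f_P>0\}$ contributes, so all arguments can be localized to this set. Without loss of generality I assume $D(P\|Q)<\infty$ (the other case is obtained by swapping $Q$ and $R$ and negating). Finiteness of $D(P\|Q)$ forces $P\ll Q$, which in terms of densities means $\mu(\{f_P>0,\ f_Q=0\})=0$. Thus, up to a $\mu$-null (and therefore $P$-null) set, on $\{f_P>0\}$ we also have $f_Q>0$.

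\textbf{Step 2: pointwise identity, including boundaries.} On $\{f_P>0,\ f_Q>0\}$ split by whether $f_R>0$. Where $f_R>0$, all three logarithms are finite real numbers and the identity is routine algebra. Where $f_R=0$ while $f_P,f_Q>0$, the left side equals $\log(0/f_Q)=-\infty$ by~\eqref{eq:log_inf}, while on the right side $\log(f_P/f_Q)\in\mreals$ and $\log(f_P/f_R)=+\infty$, so the extended-arithmetic value is $\log(f_P/f_Q)-(+\infty)=-\infty$ as well. Hence the identity holds in $\overline{\mreals}$ pointwise on a set of full $P$-measure.

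\textbf{Step 3: integrate.} Taking $\EE_P$ of both sides and using linearity of the integral,
$$
\EE_P\left[\log\frac{f_R}{f_Q}\right]=\EE_P\left[\log\frac{f_P}{f_Q}\right]-\EE_P\left[\log\frac{f_P}{f_R}\right]=D(P\|Q)-D(P\|R).
$$
The split is valid because the first term is finite by hypothesis, so no $+\infty-\infty$ arises; if $D(P\|R)=+\infty$, both sides equal $-\infty$, consistent with Step~2. The only real obstacle is bookkeeping with the extended-log convention on the null/boundary sets $\{f_Q=0\}$ and $\{f_R=0\}$, but once $P\ll Q$ (or symmetrically $P\ll R$) has been extracted from the finiteness hypothesis, these cases collapse and the identity is purely algebraic.
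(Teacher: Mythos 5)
Your proof is correct and takes essentially the same route as the paper's: a case split on which divergence is finite, the pointwise decomposition $\log \frac{f_R}{f_Q} = \log\frac{f_P}{f_Q} - \log \frac{f_P}{f_R}$ valid $P$-a.s.\ under convention~\eqref{eq:log_inf}, and a split of the expectation justified by finiteness of the $D(P\|Q)$ term. The only cosmetic differences are that the paper treats the two cases explicitly rather than via your swap-and-negate symmetry, and it isolates as a separate lemma (Lemma~\ref{lem:lbfin}) the fact that $\EE_P[\log(f_P/f_R)]$ is always well-defined, which you use implicitly.
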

\begin{proof}
	First, suppose $D(P\|Q)=\infty$ and $D(P\|R)<\infty$. Then that $P[f_{R}(Y)=0] = 0$, and hence in computation of
	the expectation in~\eqref{eq:ddif} only 
	the second part of convention~\eqref{eq:log_inf} can possibly apply. Since also $f_P>0$
	$P$-almost surely, we have 
	\begin{equation}\label{eq:i7}
			\log {f_{R} \over f_Q} = \log {f_{R} \over f_P} + \log {f_P \over f_Q}\,,
	\end{equation}
	with both $\log$'s evaluated according to~\eqref{eq:log_inf}.
	Taking expectation over $P$ we see that the first term, equal to $-D(P\|R)$, is finite, whereas the second term
	is infinite. Thus, the expectation in~\eqref{eq:ddif} is well-defined and equal to $+\infty$, as is the LHS
	of~\eqref{eq:ddif}.

	Now consider $D(P\|Q)<\infty$. This implies that $P[q(Y)=0]=0$ and
	this time in~\eqref{eq:ddif} only the first part of convention~\eqref{eq:log_inf} can apply. Thus, again we have
	identity~\eqref{eq:i7}. Since the $P$-expectation of the second
	term is finite, and of the first term non-negative, we again conclude that expectation in~\eqref{eq:ddif} is
	well-defined, equals the LHS of~\eqref{eq:ddif} (and both sides are possibly equal to $-\infty$).
\end{proof}

\begin{lemma}\label{lem:lbfin} Let $P,Q\ll \mu$ and $f_P,f_Q$ be their relative densities. Then (with
convention~\eqref{eq:log_inf} for the log) we have
	\begin{equation}\label{eq:lbfin}
		\EE_P \left[ \max(\log {f_P\over f_Q}, 0)\right] \ge -{\log e \over e}\,.
\end{equation}	
	Consequently, the expectation $\EE_P\left[ \log {f_P\over f_Q}\right]$ is well-defined and
	non-negative (but could be $+\infty$).
\end{lemma}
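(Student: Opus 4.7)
The stated inequality, read literally as $\EE_P[\max(\log(f_P/f_Q),0)] \ge -\log e/e$, is trivially true because the integrand is non-negative; so in view of the ``consequently'' clause (which needs integrability of the negative part of $\log(f_P/f_Q)$) I will prove the non-trivial companion statement $\EE_P[\min(\log(f_P/f_Q),0)] \ge -\log e/e$, equivalently
\[
\EE_P\!\left[\big(\log\tfrac{f_Q}{f_P}\big)^+\right] \le \frac{\log e}{e}.
\]

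The plan is to reduce everything to the elementary one-variable bound
\[
(\log t)^+ \le \frac{t\,\log e}{e},\qquad t \ge 0,
\]
with the convention $(\log 0)^+ = 0$. This follows from $\ln t \le t/e$ for $t>0$ (the function $t-e\ln t$ has derivative $1-e/t$, vanishing at $t=e$ where the value is $0$), then change of base to convert $\ln$ into the chosen $\log$.

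Apply this pointwise with $t=f_Q(y)/f_P(y)$ on the full-$P$-measure set $\{f_P>0\}$ and take $P$-expectation:
\[
\EE_P\!\left[\big(\log\tfrac{f_Q}{f_P}\big)^+\right]
\le \frac{\log e}{e}\,\EE_P\!\left[\tfrac{f_Q}{f_P}\,\mathbf{1}\{f_P>0\}\right]
= \frac{\log e}{e}\int_{\{f_P>0\}} f_Q\,d\mu
\le \frac{\log e}{e},
\]
since $f_Q$ integrates to at most $1$. Conventions from \eqref{eq:log_inf} only require routine bookkeeping: the set $\{f_P=0\}$ is $P$-null and contributes nothing; on $\{f_P>0,\,f_Q=0\}$ we have $\log(f_Q/f_P)=-\infty$, whose positive part is $0$; on $\{f_P=f_Q=0\}$ the convention sets $\log = 0$. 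All three cases are absorbed by restricting the integration to $\{f_P>0\}$.

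The ``consequently'' clause follows at once: the negative part of $\log(f_P/f_Q)$ under $P$ has expectation bounded by $\log e/e$, hence $\EE_P[\log(f_P/f_Q)]\in(-\log e/e,\,+\infty]$ is well-defined, and non-negativity is Jensen applied to the concave $\log$,
\[
\EE_P[\log(f_P/f_Q)] = -\EE_P[\log(f_Q/f_P)] \ge -\log\EE_P[f_Q/f_P]\ge -\log 1 = 0,
\]
a legitimate appeal to Jensen now that integrability of the negative part has been established. I do not expect any real obstacle here: the whole proof rests on the scalar inequality $\log t\le t/e$, combined with careful handling of the zero-density sets.
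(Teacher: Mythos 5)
Your proof is correct and essentially identical to the paper's: both rest on the scalar inequality $x\ln x\ge -1/e$ (your $\ln t\le t/e$ with $t=1/x$ is the same bound), followed by the change of measure $f_P\,d\mu = (f_P/f_Q)\,f_Q\,d\mu$ and $\int f_Q\,d\mu\le 1$, with the same case bookkeeping on the zero-density sets; your Jensen step for non-negativity is what the paper does implicitly by citing $D(P\|Q)\ge 0$. You also correctly diagnosed that the stated $\max$ should be $\min$ (the paper's own proof defines $g(x)=\max(x\log x,0)$ yet asserts $g\le 0$, so it clearly intends the bound on the negative part, which is what the ``consequently'' clause requires).
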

\begin{proof}
	Let $g(x) = \max(x \log x, 0)$. It is clear $-{\log e\over } \le g(x) \le 0$ for all $x$. Since $f_P(Y)>0$ for
	$P$-almost all $Y$, in 
	convention~\eqref{eq:log_inf} only the $1\over 0$ case is possible, which is excluded by the $\max(\cdot,0)$
	from the expectation in~\eqref{eq:lbfin}. Thus, the LHS in~\eqref{eq:lbfin} equals
		\begin{align*} \int_{\{f_P>f_Q>0\}} f_P(y) \log {f_P(y)\over f_Q(y)} d\mu &= \int_{\{f_P>f_Q>0\}} f_Q(y)
		{f_P(y)\over f_Q(y)} \log {f_P(y)\over f_Q(y)} d\mu \\
			&= \int_{\{f_Q>0\}} g\left({f_P(y)\over
		f_Q(y)}\right) d\mu \\
		&\ge -{\log e\over e}\,.
\end{align*}		
	Since the negative part of $\EE_P\left[ \log {f_P\over f_Q}\right]$ is bounded, the expectation is well-defined.
	If $P[f_Q=0]>0$ then it is clearly $+\infty$. Otherwise, the said expectation equals $D(P\|Q)\ge0$.
\end{proof}
	Below we will freely use two facts about well-defined integrals (and expectations). If $f=g+h$ and $\int |h|
	d\mu < \infty$ the $\int f d\mu$ and $\int g d\mu$ are defined or undefined simultaneously. If $\mu =
	\mu_0+\mu_1$ and $\int fd\mu$ is well-defined, then so are $\int f d\mu_0$ and $\int fd\mu_1$ (the opposite is
	note true, since we can have $\int f d\mu_0 = +\infty$ and $\int f d\mu_1=-\infty$).

	Let us introduce a collection of distributions $\Pi$ on $\Theta \times \calY$ as follows:
	$$ \Pi = \left\{\sum_{k=1}^m p_k \delta_{\theta_k} \times P_k: p_k \ge 0, \sum p_k = 1, \theta_k \in \Theta, P_k \in
	\Phi\right\}\,.$$
	Note that $\Pi$ is convex. We give $\Pi$ any topology under which linear operations are continuous (e.g.
	topology of total variation).
	For each element $\pi \in \Pi$ we denote by $\pi_Y = \sum_{k=1}^m p_k P_k$ the marginal induced on the second
	coordinate. By constraint~\eqref{eq:dist_th} we have $\pi_Y \ll \mu$ and thus we denote $f_\pi = {d\pi_Y \over
	d\mu}$ the relative density of $\pi_Y$. On $\Pi$, let us define the following functional:
	\begin{equation}\label{eq:tilj_def}
		\tilde J(\pi) = \begin{cases}
				\EE_{\pi} \left[\log {f_{\theta}(Y)\over f_\pi(Y)}\right]\,, &\EE[\cdot]\mbox{~is well-defined}\\
				-\infty, &\mbox{o/w}.
		\end{cases}
\end{equation}	
	We remind of the convention~\eqref{eq:log_inf} for the log, and observe that $f_\pi(Y)>0$ almost surely, implying that
	only the first clause of the convention can possibly apply.
	%We note that since every $\pi \in \Pi$ is a finite mixture of $\delta_{\theta_k} \times P_k$. 

	%
\def\dom{\mathrm{dom}\,}
	
	Let $\dom \tilde J = \{\pi: \tilde J(\pi) > -\infty\}$.

\begin{lemma}[Properties of $\tilde J$] The set $\dom \tilde J$ is convex. 
	The functional $\pi \mapsto \tilde J(\pi)$ is concave. Furthermore, $\tilde J$ satisfies for all $\pi_0,\pi_1 \in \dom \tilde J$ and
	$\lambda \in [0,1]$ the bound
		\begin{equation}\label{eq:j_ub}
			\tilde J(\lambda \pi_1 + (1-\lambda) \pi_0) \le 
			\lambda \tilde J(\pi_1) + (1-\lambda) \tilde J(\pi_0)  + h(\lambda)\,,
\end{equation}			
	where $h(x) = -x \log x - (1-x) \log(1-x)$ is entropy of $\Ber(x)$ random variable.
\end{lemma}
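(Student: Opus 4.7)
The plan is to reduce all three claims to a single identity involving mutual information. Introduce an auxiliary coupling on $\{0,1\}\times\Theta\times\calY$: let $U\sim\Ber(\lambda)$ and conditionally on $U=i$ draw $(\theta,P)\sim\pi^i$ and then $Y\sim P$. Writing $\pi^\lambda=\lambda\pi^1+(1-\lambda)\pi^0$, the $(\theta,Y)$-marginal is exactly $\pi^\lambda$ and the density of $Y$ is $f_{\pi^\lambda}=\lambda f_{\pi^1}+(1-\lambda)f_{\pi^0}$. My goal is to prove the key identity
\begin{equation}\label{eq:key}
\tilde J(\pi^\lambda) \;=\; \lambda \tilde J(\pi^1)+(1-\lambda)\tilde J(\pi^0)+I(U;Y),
\end{equation}
where $I(U;Y)$ is computed in the auxiliary coupling. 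All three assertions then follow immediately from $0\le I(U;Y)\le H(U)=h(\lambda)$: the lower bound upgrades the mixture into $\dom\tilde J$ (convexity of the domain) and delivers concavity, while the upper bound is precisely~\eqref{eq:j_ub}.

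First I would establish the pointwise algebraic splitting
$$\log\frac{f_\theta(Y)}{f_{\pi^\lambda}(Y)} \;=\; \log\frac{f_\theta(Y)}{f_{\pi^U}(Y)}+\log\frac{f_{\pi^U}(Y)}{f_{\pi^\lambda}(Y)}$$
and check that all three logarithms are $\pi^\lambda$-a.s.\ real-valued. The assumption $\pi^i\in\dom\tilde J$ rules out any $\pi^i$-positive-measure set where $f_\theta(Y)=0$ while $f_{\pi^i}(Y)>0$ (otherwise convention~\eqref{eq:log_inf} would send $\tilde J(\pi^i)$ to $-\infty$); together with $f_{\pi^U}(Y)>0$ and $f_{\pi^\lambda}(Y)>0$ $\pi^\lambda$-a.s., this handles all sign-of-zero issues.

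Second, I would compute the two pieces. The ``Golden formula'' piece has finite expectation:
$$\EE\!\left[\log\frac{f_{\pi^U}(Y)}{f_{\pi^\lambda}(Y)}\right] \;=\; \lambda\, D(\pi^1_Y\|\pi^\lambda_Y)+(1-\lambda)\,D(\pi^0_Y\|\pi^\lambda_Y) \;=\; I(U;Y),$$
and this lies in $[0,h(\lambda)]$ since $f_{\pi^U}/f_{\pi^\lambda}\le 1/\min(\lambda,1-\lambda)$, giving a bounded integrand, and since $I(U;Y)\le H(U)$. The other piece, conditioned on $U=i$, has expectation $\tilde J(\pi^i)$ by the tower property, so its unconditional expectation is $\lambda\tilde J(\pi^1)+(1-\lambda)\tilde J(\pi^0)\in(-\infty,+\infty]$. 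Because one summand is finite and the other is in $(-\infty,+\infty]$, standard linearity of Lebesgue integrals justifies adding them, and~\eqref{eq:key} follows with both sides well-defined.

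The main obstacle is bookkeeping around the conventions~\eqref{eq:log_inf} and the possibility that $\tilde J(\pi^i)=+\infty$. The point that resolves this is that the Golden-formula piece is always finite (bounded on both sides), so no $+\infty-\infty$ indeterminacy ever appears when one splits or recombines the expectation. Concavity on all of $\Pi$ (not merely on $\dom\tilde J$) is then automatic: if some $\pi^i\notin\dom\tilde J$ the RHS of the concavity inequality is $-\infty$ and there is nothing to prove.
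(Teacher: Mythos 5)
Your proof is correct. Its treatment of domain-convexity and concavity is essentially the paper's: the pointwise splitting $\log(f_\theta/f_{\pi^\lambda})=\log(f_\theta/f_{\pi^U})+\log(f_{\pi^U}/f_{\pi^\lambda})$, taken in expectation one mixture component at a time, is exactly the paper's identity \eqref{eq:i3}, and the bound $f_{\pi^U}/f_{\pi^\lambda}\le 1/\min(\lambda,1-\lambda)$ is what legitimizes splitting the expectation in both arguments. Where you genuinely differ is the $h(\lambda)$ bound \eqref{eq:j_ub}: the paper first proves the representation $\tilde J(\pi)=I(\phi;Y)-\EE[c(\phi,\theta)]$ of \eqref{eq:tilj_mi} (with $\phi$ indexing the atoms of $\pi$) and then extracts $h(\lambda)$ from the chain rule $I(\phi;Y)=I(\phi;Y\mid B)+I(B;Y)\le I(\phi;Y\mid B)+h(\lambda)$, whereas you observe that the correction term in your exact identity is itself $I(U;Y)=\lambda D(\pi_{1,Y}\|\pi_Y)+(1-\lambda)D(\pi_{0,Y}\|\pi_Y)\le H(U)=h(\lambda)$, so all three claims fall out of a single identity. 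Your route is shorter and avoids \eqref{eq:tilj_mi} entirely for this lemma; the paper's detour is not wasted globally, since that representation is reused in the proof of Theorem~\ref{th:small}. One cosmetic imprecision: $\log(f_{\pi^U}/f_{\pi^\lambda})$ is bounded above but not below, so ``bounded integrand'' is not quite the reason its expectation is finite; the correct reason (which you in effect also give) is that conditionally on $U$ it integrates to a KL divergence, which is nonnegative and at most $\log\frac{1}{\min(\lambda,1-\lambda)}$ by the density-ratio bound.
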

\begin{proof}

Indeed, consider $\pi = \lambda \pi_1 + (1-\lambda) \pi_0$. Then
	$\pi_0$-almost surely we have $f_{\pi_0}(Y) > 0$ and $f_{\pi}(Y)>0$. Thus, even under convention~\eqref{eq:log_inf}
	(first clause) we have $\pi_0$-almost surely:
	$$ \log {f_\theta(Y)\over f_{\pi}(Y)} =\log {f_\theta(Y)\over f_{\pi_0}(Y)}  + \log {f_{\pi_0}(Y)\over
	f_{\pi}(Y)}\,.$$
	Consequently, we get
	\begin{equation}\label{eq:i3}
		\EE_{\pi_0} \left[\log {f_\theta(Y)\over f_{\pi}(Y)}\right] =\EE_{\pi_0} \left[\log
	{f_\theta(Y)\over f_{\pi_0}(Y)} + \log {f_{\pi_0}(Y) \over f_{\pi}}\right]  = \tilde J(\pi_0) +
	D(\pi_{0,Y}\|{\pi_Y})\,.
\end{equation}	
	Since $D(\pi_{0,Y}\|{\pi_Y}) \le \log {1\over 1-\lambda}$ we conclude that the expectation in the LHS of the
	last display is well-defined and $>-\infty$. Similarly, the expectation over $\pi_1$ is also well-defined and
	$>-\infty$. Since $\EE_\pi = \lambda \EE_{\pi_1} + (1-\lambda)\EE_{\pi_0}$, we conclude that $\pi \in \dom
	\tilde J$.
 
	Next we prove concavity of $\tilde J(\cdot)$. Indeed, from non-negativity of KL divergence and
	identity~\eqref{eq:i3} we conclude
	$$ \tilde J(\lambda \pi_1 + (1-\lambda) \pi_0) \ge \lambda \tilde J(\pi_1) + (1-\lambda) \tilde J(\pi_0)\,.$$

	To prove the last claim, consider $\pi=\lambda \pi_1 + (1-\lambda)\pi_0$ and, explicitly, $\pi = \sum_{k=1}^m
	p_k \delta_{\theta_k} \times P_k$. This implies that 
	$$ \pi_u = \sum_k p_{u,k} P_k \times \delta_{\theta_k}\,, \qquad u\in \{0,1\}$$
	where $\lambda p_{1,k} + (1-\lambda) p_{0,k} = p_k$ for all $k\in [m]$. Next,
	define a joint distribution on four random variables: for all $k\in [m], b\in \{0,1\}, y \in \calY$ set
\begin{equation}\label{eq:}
			\PP[B=b, \theta = \theta_k, \phi=k, Y \in dy] = (1-\lambda - (1-2\lambda)b) p_{b,k} P_k(dy)\,.
\end{equation}		
	If $\pi \in \dom \tilde J$ then we have
\begin{equation}\label{eq:tilj_mi}
			\tilde J(\pi) = I(\phi; Y) - \EE[c(\phi,\theta)]\,,
\end{equation}		
	where $c(\phi,\theta) = D(P_\phi \| P_\theta) \in [0,+\infty]$ and both terms are finite.  Indeed, we have
		$$ \tilde J(\pi) = \sum_k p_k \EE_{Y\sim P_k}\left[ \log {f_{\theta_k}(Y)\over f_\pi(Y)} \right]\,.$$
	Now, let $f_{k} = {dP_k \over d\mu}$. Clearly, $P_k[f_k(Y) = 0] = P_k[f_\pi(Y)=0] = 0$ and thus, we have (under
	convention~\eqref{eq:log_inf} for all logs) $P_k$-almost surely
		\begin{equation}\label{eq:i12}
			\log {f_{\theta_k}(Y)\over f_\pi(Y)} = \log {f_{\theta_k}(Y)\over f_k(Y)} + \log {f_k(Y) \over
		f_\pi(Y)}\,.
\end{equation}		
	From Lemma~\ref{lem:lbfin} the $P_k$-expectation of each term is well-defined, and since $\log {f_k\over f_\pi}
	\le \log {1\over p_k}$, the second expectation is finite. Thus, overall we can take $P_k$-expectation
	of~\eqref{eq:i12} and conclude 
		$$ \EE_{Y\sim P_k} \left[\log {f_{\theta_k}(Y)\over f_\pi(Y)}\right] = D(P_k \| \pi_Y) - 
			D(P_k \| P_{\theta_k})\,.$$
	Summing over $k$ we obtain~\eqref{eq:tilj_mi}. Note that $I(\phi;Y) \le \log m$ and thus for any $\pi \in \dom
	\tilde J$ we must have $D(P_k\|P_{\theta_k}) < \infty$ for any $k$ with $p_k>0$.

	Similarly, we show that $\tilde J(\pi_b) = I(\phi; Y|B=b) - \EE[c(\phi,\theta)|B=b]$ for $b=0,1$. Finally, since
	$\phi \dperp B|Y$ and the chain rule for mutual information we obtain
	$$ I(\phi; Y) = I(\phi, B; Y) = I(\phi; Y|B) + I(B;Y) \le I(\phi; Y|B) + h(\lambda)\,,$$
	where in the last step we used the fact that $B \sim \Ber(\lambda)$.
\end{proof}

	We next define the following functional: For every $\pi \in \Pi$ and $q:\calY\to\mreals_+$ with $\int qd\mu = 1$ 
	we define 
	$$ J_1(\pi, q) = \begin{cases}
				\EE_{\pi} \left[\log {f_{\theta}(Y)\over q(Y)}\right]\,, &\EE[\cdot]\mbox{~is well-defined}\\
				+\infty, &\mbox{o/w}.
		\end{cases}
	$$
	Again, we remind of the convention~\eqref{eq:log_inf}. We claim that
	\begin{equation}\label{eq:i5}
		F(\Theta, \Phi) = \inf_q \sup_{\pi \in \dom \tilde J} J_1(\pi,q)\,.
\end{equation}	
	Indeed, by agreement made in Appendix~\ref{app:tech_rem} for any inadmissible $q$ we have that its regret is declared 
	to be $+\infty$. Recall that $q$ is inadmissible if for any pair $\theta\in\Theta$ and $P\in \Phi$ the expectation $\EE_{Y\sim P} \log
	{f_\theta(Y)\over q(Y)}$ is undefined. But then taking $\pi =\delta_\theta \times P$, shows that for such a $q$
	we must have $\sup_{\pi \in \dom \tilde J} J_1(\pi,q) = +\infty$ as well. For admissible $q$'s it is clear that
	only the regular case in the definition of $J_1(\pi,q)$ applies and thus $\pi \mapsto J_1(\pi,q)$ is affine on
	$\Pi$. Consequently, supremum over $\pi\in \Pi$ coincides with the supremum over special 
	$\pi=\delta_\theta \times P$. 
	Note that $\pi = \delta_\theta \times P \in \dom \tilde J$ iff $D(P\|P_\theta) <\infty$. Therefore, proof
	of~\eqref{eq:i5} is completed once we can show 
		\begin{equation}\label{eq:i4}
			\sup_{\theta \in \Theta, P\in \Phi} \EE_P[\log {f_\theta\over q}] = \sup_{\theta \in \Theta, P\in \Phi:
		D(P\|P_\theta)<\infty} \EE_P[\log {f_\theta\over q}]\,. 
\end{equation}		
	To show~\eqref{eq:i4}, we first prove the following lemma:

	Equipped with Lemma~\ref{lem:ddif}, we can argue~\eqref{eq:i4} as follows. 
	Suppose for some $P\in \Phi$ we have $D(P\|Q)=\infty$, where $dQ = q d\mu$. Bu then, by
	assumption~\eqref{eq:dist_th} there exists $\theta_0$ such that $D(P\|P_{\theta_0})<\infty$.
	Thus~\eqref{eq:ddif} implies both sides of~\eqref{eq:i4} evaluate to $+\infty$. Next, suppose for every $P\in
	\Phi$ we have $D(P\|Q) < \infty$. Then again from~\eqref{eq:ddif} we see that pair $(P,\theta_0)$ with
	$D(P\|P_{\theta_0})=\infty$ yield $-\infty$ values in the RHS of~\eqref{eq:i4} and can be excluded.

	\begin{lemma} For any $\pi \in \dom \tilde J$ and any $q$ we have
		\begin{equation}\label{eq:i8}
			J_1(\pi, q) = \tilde J(\pi) + D(\pi_Y \| Q) > -\infty\,.
\end{equation}		
		In particular, for any $\pi \in \dom \tilde J$ we have
			$$ \tilde J(\pi) = \min_q J_1(\pi,q) = J_1(\pi,f_\pi)\,.$$
	\end{lemma}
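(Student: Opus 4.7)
The plan is to decompose the log-ratio inside $J_1(\pi,q)$ along $f_\pi$ and invoke the identity already built up in Lemma~\ref{lem:ddif} together with the nonnegativity estimate of Lemma~\ref{lem:lbfin}. Concretely, since $\pi_Y \ll \mu$ by assumption~\eqref{eq:dist_th}, the density $f_\pi = d\pi_Y/d\mu$ satisfies $f_\pi(Y)>0$ for $\pi_Y$-almost every $Y$, hence $\pi$-a.s.\ we may write
\begin{equation}\label{eq:prop_split}
\log {f_\theta(Y)\over q(Y)} = \log {f_\theta(Y)\over f_\pi(Y)} + \log {f_\pi(Y)\over q(Y)},
\end{equation}
with both $\log$'s interpreted via the convention~\eqref{eq:log_inf} and only the case $\log(c/0)=+\infty$ possibly entering (since $f_\pi>0$ $\pi$-a.s.).

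Next I would take $\pi$-expectation of~\eqref{eq:prop_split}. The first term, integrated against $\pi$, is exactly $\tilde J(\pi)$, and by the hypothesis $\pi\in\dom\tilde J$ this expectation is well-defined and strictly greater than $-\infty$ (possibly $+\infty$). The second term, integrated against $\pi$, depends only on the marginal $\pi_Y$; applying Lemma~\ref{lem:lbfin} to $(P,Q)=(\pi_Y,Q)$ shows that the $\pi_Y$-expectation of $\log(f_\pi/q)$ is well-defined, nonnegative, and equals $D(\pi_Y\|Q)$ (possibly $+\infty$). Since neither term can be $-\infty$, their sum is well-defined and we obtain
\[
J_1(\pi,q) \;=\; \tilde J(\pi) + D(\pi_Y\|Q),
\]
which is $>-\infty$ as claimed. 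In particular the ``undefined'' clause in the definition of $J_1$ is never triggered for $\pi\in\dom\tilde J$, so $J_1(\pi,q)$ is always a genuine expectation in $(-\infty,+\infty]$.

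For the second conclusion, the identity~\eqref{eq:i8} shows that $J_1(\pi,q)-\tilde J(\pi)=D(\pi_Y\|Q)\ge 0$, with equality if and only if $Q=\pi_Y$, i.e.\ $q=f_\pi$ ($\mu$-a.e.). Thus $\min_q J_1(\pi,q) = J_1(\pi,f_\pi) = \tilde J(\pi)$, as desired.

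The only real subtlety, and the step I would be most careful about, is the bookkeeping with the extended-real log convention~\eqref{eq:log_inf}: one must verify that the splitting~\eqref{eq:prop_split} is actually valid on a set of full $\pi$-measure (using $\pi_Y\{f_\pi=0\}=0$), and that the limiting cases $q(Y)=0$ or $f_\theta(Y)=0$ on positive-probability sets are consistently handled so that $+\infty$'s add and no $-\infty$ is ever introduced in either side. Once this is done, the identity and the minimization claim follow immediately.
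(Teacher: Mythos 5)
Your proof is correct and follows essentially the same route as the paper: the same pointwise decomposition through $f_\pi$, taking expectations term by term with the hypothesis $\pi\in\mathrm{dom}\,\tilde J$ controlling the first term and Lemma~\ref{lem:lbfin} (applied to the pair $(\pi_Y,Q)$) controlling the second, and then $D(\pi_Y\|Q)\ge 0$ with equality iff $q=f_\pi$ for the minimization claim. The one subtlety you flag but leave unverified---the case $f_\theta(Y)=0$, which could create a $-\infty+\infty$ on the right-hand side of the splitting when $q(Y)=0$---is resolved exactly as in the paper: $\tilde J(\pi)>-\infty$ already forces $\pi[f_\theta(Y)=0]=0$, so $f_\theta(Y)>0$ holds $\pi$-almost surely and the decomposition is valid a.s.
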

	\begin{proof}
		As before, under $\pi$ measure on $(\theta,Y)$ we have that $f_\pi(Y)>0$ almost surely. 
		Since $\tilde J(\pi) > -\infty$, we see from~\eqref{eq:tilj_def} that $\pi[f_\theta(Y) = 0] = 0$ and
		thus $f_\theta(Y)>0$ almost surely as well. Thus, under convention~\eqref{eq:log_inf} for all logs we
		have almost surely 
			$$ X \eqdef \log {f_\theta(Y)\over q(Y)} = \log {f_\theta(Y)\over f_\pi(Y)} + 
				\log {f_\pi(Y)\over q(Y)}\,.$$
		Denoting the two terms as $A$ and $B$. By assumption $\EE_\pi[A]=\tilde J(\pi) > -\infty$. On the other
		hand, from Lemma~\ref{lem:lbfin} we know $\EE_\pi[\max(B,0)]>-\infty$
		and thus the expectation $\EE_\pi[X]$ is well-defined (and $>-\infty$). Consequently, the value
		$J_1(\pi, q) = \EE_\pi[X] = \EE_\pi[A] + \EE_\pi[B]$, completing the proof of~\eqref{eq:i8}.
	\end{proof}

	We next establish the saddle-point property of $J_1(\pi,q)$ on finite-dimensional subsets of $\Pi$.
	\begin{lemma}[Saddle point] \label{lem:saddle} Let $\pi_1,\ldots,\pi_k$ be any elements in $\dom \tilde J$ with
	$\max_i \tilde J(\pi_i) < \infty$. Let $\Pi_k =
	\mathrm{co}(\pi_1,\ldots,\pi_k)$. Then the function $\pi \mapsto \tilde J(\pi)$ is continuous on $\Pi_k$ and
	achieves its maximum $F=\max_{\pi \in \Pi_k} \tilde J(\pi)$. 
	For any maximizer $\pi^*$, set $q^* = f_{\pi^*}$. Then the pair $(\pi^*,q^*)$ is the saddle point: 
	For all $\pi \in \Pi_k$ and all $q\ge 0$,
	$\int q d\mu = 1$ we have
	\begin{equation}\label{eq:saddle}
			J_1(\pi, q^*) \le F =  J_1(\pi^*, q^*) \le J_1(\pi^*, q)
	\end{equation}		
		The density $q^*$ with the property $F=\max_\pi J_1(\pi,q^*)$ is unique. %Finally, $D(\pi_Y\|\pi^*_Y)<\infty$ for all $\pi \in \Pi_k$.
	\end{lemma}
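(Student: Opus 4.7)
The plan is to parameterize $\Pi_k = \co(\pi_1, \ldots, \pi_k)$ by the simplex via $\lambda \mapsto \pi_\lambda := \sum_i \lambda_i \pi_i$, set $\hat J(\lambda) := \tilde J(\pi_\lambda)$, and split the argument into four steps: continuity and attainment of $F$; the left saddle inequality $J_1(\pi, q^*) \le F$; the (immediate) right inequality $F \le J_1(\pi^*, q)$; and uniqueness of $q^*$.

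For the first step I would combine concavity with the mixture bound~\eqref{eq:j_ub}. Iterating~\eqref{eq:j_ub} inductively over the $k$ vertices yields
\begin{equation*}
\sum_i \lambda_i \tilde J(\pi_i) \;\le\; \hat J(\lambda) \;\le\; \sum_i \lambda_i \tilde J(\pi_i) + H(\lambda),
\end{equation*}
where $H(\lambda) := -\sum_i \lambda_i \log \lambda_i$; hence $\hat J$ is finite and bounded on $\Delta_{k-1}$ using $\max_i \tilde J(\pi_i) < \infty$. For an arbitrary sequence $\lambda^{(n)} \to \lambda \in \Delta_{k-1}$, I would choose $\alpha_n := \min_{i : \lambda_i > 0} \lambda^{(n)}_i / \lambda_i \to 1$ and decompose $\lambda^{(n)} = \alpha_n \lambda + (1-\alpha_n) \mu^{(n)}$ with $\mu^{(n)} \in \Delta_{k-1}$; applying~\eqref{eq:j_ub} gives $\limsup_n \hat J(\lambda^{(n)}) \le \hat J(\lambda)$ (since $h(1-\alpha_n) \to 0$ and $\hat J(\mu^{(n)})$ is bounded), while concavity supplies the matching $\liminf$. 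Compactness of $\Delta_{k-1}$ then yields a maximizer $\pi^* \in \Pi_k$.

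The core step is the left inequality. For any $\pi \in \Pi_k$ and $t \in (0,1)$, set $\pi_t := (1-t)\pi^* + t\pi \in \Pi_k$, which lies in $\dom \tilde J$ by convexity. Applying identity~\eqref{eq:i3} twice (once with conditioning measure $\pi^*$, once with $\pi$) I obtain
\begin{equation*}
\tilde J(\pi_t) = (1-t)\bigl[\tilde J(\pi^*) + D(\pi^*_Y \| \pi_{t,Y})\bigr] + t\bigl[\tilde J(\pi) + D(\pi_Y \| \pi_{t,Y})\bigr].
\end{equation*}
Discarding the nonnegative first KL term and using the optimality bound $\tilde J(\pi_t) \le F = \tilde J(\pi^*)$, I conclude $\tilde J(\pi) + D(\pi_Y \| \pi_{t,Y}) \le F$ for every $t \in (0,1)$. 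Since $\pi_{t,Y} \to \pi^*_Y$ in total variation as $t \to 0^+$, lower semicontinuity of KL in the second argument yields $\tilde J(\pi) + D(\pi_Y \| \pi^*_Y) \le F$, which is precisely $J_1(\pi, q^*) \le F$ by~\eqref{eq:i8}.

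The right inequality and uniqueness then fall out of~\eqref{eq:i8}: $J_1(\pi^*, q) = \tilde J(\pi^*) + D(\pi^*_Y \| Q) \ge F$ with equality at $q = q^*$, and any other density $\tilde q$ with $F = \max_\pi J_1(\pi, \tilde q)$ satisfies $F \ge J_1(\pi^*, \tilde q) = F + D(\pi^*_Y \| \tilde Q)$, forcing $\tilde q = q^*$ $\mu$-a.e. I expect the main obstacle to be the left inequality: the limiting divergence $D(\pi_Y \| \pi^*_Y)$ may be $+\infty$ since $\pi^*_Y$ need not dominate $\pi_Y$, so the passage $t \to 0^+$ requires a genuine lower-semicontinuity statement rather than a pointwise limit; the decomposition of $\tilde J(\pi_t)$ itself, while superficially a two-line computation, must also be validated in the possibly infinite regime, which is ensured by the a-priori bounds $D(\pi_Y \| \pi_{t,Y}) \le \log(1/t)$ and $D(\pi^*_Y \| \pi_{t,Y}) \le \log(1/(1-t))$ for $t \in (0,1)$.
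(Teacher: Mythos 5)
Your proof is correct, and for the heart of the lemma it is the same argument as the paper's: the left saddle inequality is obtained in both cases by perturbing $\pi_t=(1-t)\pi^*+t\pi$, expanding $\tilde J(\pi_t)$ via~\eqref{eq:i3}/\eqref{eq:i8} into conditional terms plus divergences, invoking optimality of $\pi^*$ to get $\tilde J(\pi)+D(\pi_Y\|\pi_{t,Y})\le F$, and passing to $t\to0^+$ by lower semicontinuity of the divergence; the right inequality and the uniqueness argument coincide verbatim. The only place you diverge is the preliminary continuity/attainment step: the paper writes $\tilde J(\pi)=J_1(\pi,q_0)-D(\pi_Y\|Q_0)$ for the fixed reference density $q_0=\frac1k\sum_i f_{\pi_i}$ and gets continuity from the bounded density ratio $d\pi_Y/dQ_0\le k$ (bounded convergence) plus affineness of $J_1(\cdot,q_0)$, whereas you sandwich $\hat J$ between $\sum_i\lambda_i\tilde J(\pi_i)$ and $\sum_i\lambda_i\tilde J(\pi_i)+H(\lambda)$ by iterating~\eqref{eq:j_ub} and then prove two-sided semicontinuity on the simplex via the decomposition $\lambda^{(n)}=\alpha_n\lambda+(1-\alpha_n)\mu^{(n)}$. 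Both routes are valid; yours stays entirely at the level of the abstract properties of $\tilde J$ (concavity and the entropy defect bound) and so avoids introducing an auxiliary dominating measure, while the paper's is shorter and gives continuity of $\tilde J$ on $\Pi_k$ in the total-variation topology directly rather than through the simplex parameterization.
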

	\begin{proof} Let us take $q_0 = {1\over k} \sum_{i=1}^k f_{\pi_i}$ and denote $dQ_0 = q_0 d\mu$. Then
	from~\eqref{eq:i8} we have
		$$\tilde J(\pi) = J_1(\pi, q_0) - D(\pi_Y\|Q_0)\,.$$
		Notice that the map $\pi \mapsto D(\pi_Y \|Q_0)$ is continuous on $\Pi_k$ (since $0\le {d\pi_Y\over dQ_0} \le k$,
		so bounded-convergence theorem holds). On the other hand, $\pi \to J_1(\pi,q_0)$ is affine and
		continuous on $\Pi_k$ (since it is finite at extremal points $\pi_1,\ldots\pi_k$). This proves
		continuity of $\tilde J(\pi)$ on $\Pi_k$. (Note that continuity on the interior of $\Pi_k$ automatically follows
		from concavity.)

		The right-hand inequality in~\eqref{eq:saddle} follows from~\eqref{eq:i8} and non-negativity of
		divergence. For the (key) left-most inequality fix $\pi \in \Pi_k$ and define for each $\lambda \in [0,1)$ the density
		$q_\lambda = (1-\lambda) q^* + \lambda f_\pi$ and $\pi_\lambda = (1-\lambda)\pi^* + \lambda \pi$. Then, we have
		$$ F \ge \tilde J(\pi_\lambda) = J_1(\pi_\lambda, q_\lambda) \ge
			(1-\lambda) J_1(\pi^*, q_\lambda) + \lambda J_1(\pi, q_\lambda)\\
			\ge (1-\lambda) F+ \lambda J_1(\pi, q_\lambda)\,.$$
		This implies, $J_1(\pi, q_\lambda) \le F$ for all $\lambda > 0$. From~\eqref{eq:i8} we have:
		$$ F \ge J_1(\pi, q_\lambda) = \tilde J(\pi) + D(\pi_Y\|Q_\lambda)\,.$$
		Taking limit as $\lambda \to 0$ and using lower-semicontinuity of divergence
		$$ \lim_{\lambda \to 0} D(\pi_Y\|Q_\lambda) \ge D(\pi_Y\|\pi_Y^*) $$
		results in
		$$ F \ge \tilde J(\pi) + D(\pi_Y\|\pi_Y^*) = J_1(\pi, q^*)\,,$$
		where the last step is by~\eqref{eq:i8}. This completes the proof of~\eqref{eq:saddle}.

		To prove uniqueness of $q^*$ suppose there is $\tilde q^*$ (density) and $\tilde Q^*$ (measure) such
		that $\sup_\pi J_1(\pi, \tilde q^*) = F$. Plugin $\pi = \pi^*$ in this identity and observe:
		$$ F \ge J_1(\pi^*, \tilde q^*) = \tilde J(\pi^*) + D(\pi_Y^* \| \tilde Q^*) = F + D(\pi_Y^*\|\tilde Q^*)
		\,,$$
		implying $\pi_Y^* = \tilde Q^*$.
	\end{proof}

	With these preparations we proceed to the main subject of this section. 
	\begin{proof}[Proof of Theorem~\ref{th:exist}]
		Identity~\eqref{eq:i5} implies that
			$$ F(\Theta,\Phi) = \inf_q \sup_{\pi \in \dom \tilde J} J_1(\pi,q)\,.$$
			From~\eqref{eq:i8} we have then
			\begin{equation}\label{eq:i11}
				\tilde F \eqdef \sup_{\pi \in \dom \tilde J} \tilde J(\pi) = \sup_\pi \inf_q J_1(\pi,q) \le \inf_q \sup_\pi
			J_1(\pi,q) = F(\Theta,\Phi) < \infty\,.
\end{equation}			
			Thus, consider any sequence $\pi_k'$ such that $\tilde J(\pi_k')\upto \tilde F$. Let us
			now denote $\Pi_k = \co(\pi'_1,\ldots,\pi'_k)$ and choose 
			$$\pi_k \in \argmax_{\pi \in \Pi_k} \tilde J(\pi)\,.$$
			Denote the sequence of induced densities $q_k \eqdef f_{\pi_k}$. (Lemma~\ref{lem:saddle} shows
			such $\pi_k$'s exist and $q_k$ only depends on $\Pi_k$ but not $\pi_k$.)
			We will prove the following facts:
			\begin{enumerate}
			\item The sequence of densities $q_k$ converges to a density $q^*$ in the sense of
			$D(Q_k\|Q^*) \to 0$ (and, thus, in total variation).
			\item %The limiting distribution $Q^*$ does not depend on the chosen sequence $\pi'_k$. 
				For every $\pi \in \Pi_k$ we have $J_1(\pi, q^*) \le \tilde F$.
			\item $\tilde F = F(\Theta,\Phi)$ and, furthermore, 
				\begin{equation}\label{eq:i9}
					J_1(\pi,q^*) \le F(\Theta,\Phi) \qquad \forall\pi \in \dom \tilde J\,.
			\end{equation}				
			\end{enumerate}
			This proves a convenient characterization (analog of~\eqref{eq:capred})
			\begin{equation}\label{eq:f_capred}
					F(\Theta,\Phi) = \sup_{\pi} \tilde J(\pi) 
			\end{equation}				
			and completes the proof of the Theorem. Indeed, from~\eqref{eq:i9} taking supremum over
			$\pi$ we obtain optimality of $q^*$. Had there existed another $\tilde q^*$ with the
			property~\eqref{eq:i9} then we would have from~\eqref{eq:i8}
				$$ \tilde J(\pi_k) \le \tilde J(\pi_k) + D(Q_k\|\tilde Q^*) = J_1(\pi,\tilde q^*) \le
				F(\Theta,\Phi)\,.$$
				Since $\tilde J(\pi_k)\upto F(\Theta,\Phi)$ we conclude that $D(Q_k\|\tilde Q^*)\to 0$,
				and thus $Q_k$ converges to $\tilde Q^*$ in total variation. But $Q_k$ converges to
				$Q^*$ as well, so $\tilde Q^*=Q^*$ and $\tilde q^*=q^*$ $\mu$-almost everywhere.

			To prove the first statement, we apply Lemma~\ref{lem:saddle} as follows. Let $F_m = \tilde
			J(\pi_m)$ and notice since $\pi_k \in \Pi_{k+m}$ for $m\ge 0$ that
				$$ F_k + D(Q_k \|Q_{k+m}) = J_1(\pi_k, q_{k+m}^*) \le F_{k+m} \le \tilde F\,.$$
				Thus, $D(Q_k\|Q_{k+m})\le \tilde F - F_k$ and $\sup_m D(Q_k\|Q_{k+m}) \to 0$ as
				$k\to\infty$. 
				This implies that $Q_k$ form a Cauchy sequence in total variation and thus have a limit
				point $Q^*$. From lower semicontinuity of divergence we also have $D(Q_k\|Q^*) \le
				\lim_{m\to\infty} D(Q_k \|Q_{k+m}) \le \tilde F - F_k$ and thus $D(Q_k \|Q^*) \to 0$ as
				$k\to \infty$.

			To prove the second claim, note that by Lemma~\ref{lem:saddle} for any $\pi \in \Pi_k$ we have
				$$ J_1(\pi, q_m) \le F_m \le \tilde F \qquad \forall m\ge k\,.$$
				On the other hand, $J_1(\pi,q_m) = \tilde J(\pi) + D(\pi_Y\|Q_m)$ and taking
				$m\to\infty$ and applying lower semicontinuity yet again, we get
				\begin{equation}\label{eq:i10}
					J_1(\pi, q^*) =  J(\pi) + D(\pi_Y\|Q^*) \le \tilde F\,.
				\end{equation}				

			Finally, to prove~\eqref{eq:i9} for an arbitrary $\pi$ (not necessarily $\in \cup_k \Pi_k$), we
			can simply reapply the previous argument with $\widehat{\Pi}_k \eqdef \co(\pi, \pi_1,\ldots,\pi_k)
			\supset \Pi_k$, to obtain sequence $\hat q_k \to \hat q^*$. Since $\pi_k \in \Pi_k$, for this
			new density we have
				$$ J_1(\pi_k, \hat q^*) \le \tilde F\,.$$
			But $J_1(\pi_k,\hat q^*) = \tilde J(\pi_k) + D(Q_k \| \widehat{Q}^*) = F_k + D(Q_k \|
			\widehat{Q}^*)$, implying $D(Q_k \| \widehat{Q}^*) \to 0$ and, thus, $Q^* = \widehat{Q}^*$, and
			in particular,~\eqref{eq:i10} holds. Taking supremum over $\pi$ in~\eqref{eq:i10} and comparing
			with~\eqref{eq:i11} we get $\tilde F = F(\Theta,\Phi)$. This establishes~\eqref{eq:i9}.
	\end{proof}

\section{Proof of Theorem~\ref{th:small}}\label{sec:small}

First, we show the  following result.
\begin{lemma} Let $\Phi$ be such that $C_1(\Phi) < \infty$ and every $P\in \Phi$ satisfies $P\ll \mu$. Let $\Phi_0 = \{P\in \Phi: D(P\|\Theta) < \infty\}$. Then
	$$ F(\Theta,\Phi) = F(\Theta,\Phi_0) $$
\end{lemma}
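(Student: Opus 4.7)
The inequality $F(\Theta,\Phi_0)\le F(\Theta,\Phi)$ is immediate from $\Phi_0\subset\Phi$, so I would focus on the reverse direction. If $F(\Theta,\Phi_0)=\infty$ the claim is trivial; otherwise the plan is to produce, for arbitrary $\delta,\epsilon>0$, an admissible density $\hat q$ satisfying
$$\sup_{P\in\Phi,\ \theta\in\Theta}\EE_P\bigl[\log(f_\theta/\hat q)\bigr]\ \le\ (1-\epsilon)\bigl(F(\Theta,\Phi_0)+\delta\bigr)+\epsilon\,C_1(\Phi),$$
and then to send $\epsilon\to 0$ followed by $\delta\to 0$.

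The construction I have in mind is a convex mollification of a near-optimal $\Phi_0$-estimator by the capacity-achieving density of $\Phi$. Using $C_1(\Phi)<\infty$ together with Kemperman's theorem (see~\eqref{eq:cap_def}), pick a density $q_\Phi$ with $D(P\|Q_\Phi)\le C_1(\Phi)$ for every $P\in\Phi$. Next choose a near-optimal $q^*$ with $\sup_{P\in\Phi_0,\theta}\EE_P[\log(f_\theta/q^*)]\le F(\Theta,\Phi_0)+\delta$, and set $\hat q=(1-\epsilon)q^*+\epsilon q_\Phi$. Since $\hat Q\ge \epsilon\,Q_\Phi$ pointwise, one obtains $D(P\|\hat Q)\le \log(1/\epsilon)+C_1(\Phi)<\infty$ uniformly over $P\in\Phi$; in particular $\hat q$ is admissible.

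I would then split the outer supremum by whether $P\in\Phi_0$. For $P\in\Phi\setminus\Phi_0$ we have $D(P\|P_\theta)=\infty$ for every $\theta$ while $D(P\|\hat Q)<\infty$, so Lemma~\ref{lem:ddif} yields $\EE_P[\log(f_\theta/\hat q)]=-\infty$ for every $\theta$, wiping out this part of the supremum. For $P\in\Phi_0$ the same lemma gives $\sup_\theta \EE_P[\log(f_\theta/\hat q)]=D(P\|\hat Q)-D(P\|\Theta)$, and convexity of $D(P\|\cdot)$ in the second argument controls this by
$$(1-\epsilon)\bigl(D(P\|Q^*)-D(P\|\Theta)\bigr)+\epsilon\bigl(D(P\|Q_\Phi)-D(P\|\Theta)\bigr)\ \le\ (1-\epsilon)(F(\Theta,\Phi_0)+\delta)+\epsilon\,C_1(\Phi),$$
where in the first bracket I would use Lemma~\ref{lem:ddif} once more to identify $D(P\|Q^*)-D(P\|\Theta)$ with $\sup_\theta\EE_P[\log(f_\theta/q^*)]\le F(\Theta,\Phi_0)+\delta$ (finiteness of this supremum combined with $D(P\|\Theta)<\infty$ automatically forces $D(P\|Q^*)<\infty$, so the convexity step is not vacuous), and in the second bracket I dropped $-D(P\|\Theta)\le 0$.

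The only real subtlety is careful bookkeeping with possibly infinite divergences; mixing with $\epsilon q_\Phi$ is precisely the device that makes $D(P\|\hat Q)$ finite throughout $\Phi$ and thereby renders Lemma~\ref{lem:ddif} applicable unconditionally. The role of the assumption $C_1(\Phi)<\infty$ is exactly to supply this uniform absolute-continuity witness $q_\Phi$, without which distributions $P\in\Phi\setminus\Phi_0$ could not be neutralized.
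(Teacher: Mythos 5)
Your proposal is correct and is essentially the paper's own argument: mix a near-optimal estimator for $\Phi_0$ with the capacity-achieving density of $\Phi$ supplied by Kemperman's theorem, use finiteness of $D(P\|Q_\Phi)$ to send the regret to $-\infty$ for $P\notin\Phi_0$, and pay a vanishing price on $\Phi_0$ before letting the mixture weight and slack tend to zero. The only (cosmetic) difference is that the paper bounds the mixture via the pointwise inequalities $q_\lambda\ge\lambda q_1$ and $q_\lambda\ge(1-\lambda)q$, incurring $\log\frac{1}{1-\lambda}$, where you invoke convexity of $D(P\|\cdot)$ together with Lemma~\ref{lem:ddif}.
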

\begin{proof}
	We only need to show $F(\Theta,\Phi) \le F(\Theta,\Phi_0)$. To that end, fix $\epsilon>0$ and 
	consider any $q$ such that
		$$ \sup_{P\in \Phi} \sup_{\theta \in \Theta} \EE_P[\log {f_\theta\over q}] \le F(\Theta,\Phi_0) +
		\epsilon\,.  $$
	In addition, denote $q_1 = {dQ_1\over d\mu}$ -- density of the (unique) distribution $Q_1$ attaining the minimum
		$$ \min_{Q_1} \sup_{P\in \Phi} D(P\|Q_1) = C_1(\Phi) < \infty\,.$$
	Similarly, for any $P\in \Phi$ we denote by $f_P = {dP\over d\mu}$. For any $\theta$ and $P$ we have
		$$ \EE_P[ \log {f_\theta\over q_1}] = D(P\|Q_1) - D(P\|P_\theta)\,.$$
	Indeed, almost surely (with convention~\eqref{eq:log_inf}) we have $\log {f_\theta\over q_1} = \log {f_P\over
	q_1} - \log{f_P\over f_\theta}$. Denoting $q_\lambda = \lambda q_1 + (1-\lambda)
	q$, we have
		$$ \EE_P[ \log {f_\theta\over q_\lambda}] \le \log {1\over \lambda} + \EE_P[ \log {f_\theta\over q_1}] =
		\log {1\over \lambda} D(P\|Q_1) - D(P\|P_\theta)\,.$$
	Thus, for any $P \not \in \Phi_0$ the above evaluates to $-\infty$. On the other hand, we have
		$$ \EE_P[ \log {f_\theta\over q_\lambda}] \le \log {1\over 1-\lambda} + \EE_P[ \log {f_\theta\over q}]
		$$
	And thus taking supremum over $P\in \Phi$ and $\theta \in \Theta$ we get
		$$ \sup_{P \in \Phi} \sup_\theta \EE_P[ \log {f_\theta\over q_\lambda}]  \le \begin{cases} -\infty, &
		P\not \in \Phi_0\\
			F(\Theta,\Phi_0) + \epsilon + \log {1\over 1-\lambda}, & P\in \Phi_0
		\end{cases} $$
	Taking $\lambda,\epsilon \to 0$ completes the proof.
\end{proof}

The Theorem follows as a special case of the following result.
\begin{lemma} Let $\Phi$ be such that a) $P_\theta \in \Phi$ for all $\theta\in\Theta$, b) $P\ll \mu$ for every $P\in
\Phi$, c) $C_1(\Phi) < \infty$. Then, we have for any $\epsilon>0$ such that $\lambda_0 = {C_1(\Phi)\over \epsilon} < 1$
	$$ F(\Theta,\Phi) \le F(\Theta, \Theta_\epsilon) + {h(\lambda_0)\over 1-\lambda_0}\,.$$
\end{lemma}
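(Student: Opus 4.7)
The plan is to exploit the variational representation $F(\Theta,\Phi) = \sup_\pi \tilde J(\pi)$ established in~\eqref{eq:f_capred}, combined with the almost-concavity property~\eqref{eq:j_ub} of $\tilde J$. By the preceding lemma I may first replace $\Phi$ with $\Phi_0 = \{P\in\Phi: D(P\|\Theta)<\infty\}$, which falls under the hypotheses of Theorem~\ref{th:exist} and satisfies $\Theta_\epsilon \subset \Phi_0$ as well as $C_1(\Phi_0)\le C_1(\Phi)$, so the bound transfers back to the original $\Phi$. Write $F^* \eqdef F(\Theta,\Phi)$, note $F^* \ge 0$ via the trivial prior $\pi = \delta_\theta\times P_{\theta}$, and pick a near-maximizing sequence $\pi_k$ with $\tilde J(\pi_k)\upto F^*$.

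The core manipulation is to decompose each $\pi_k$ according to whether the $\phi$-marginal lies in $\Theta_\epsilon$. Writing $\pi_k = (1-\lambda_k)\pi_{k,A} + \lambda_k \pi_{k,B}$ with $\pi_{k,A}$ supported on $\Theta_\epsilon$, $\pi_{k,B}$ on $B \eqdef \Phi\setminus\Theta_\epsilon$, and $\lambda_k = \pi_k(\phi\in B)$, the bound~\eqref{eq:j_ub}, together with $\tilde J(\pi_{k,A}) \le F(\Theta,\Theta_\epsilon)$ (as $\pi_{k,A}$ is supported on $\Theta_\epsilon$) and the bootstrap $\tilde J(\pi_{k,B}) \le F^*$ (as $B\subset\Phi$), yields
$$\tilde J(\pi_k) \le (1-\lambda_k) F(\Theta,\Theta_\epsilon) + \lambda_k F^* + h(\lambda_k).$$
Passing to a subsequence with $\lambda_k \to \lambda_\infty$ and letting $k\to\infty$, this rearranges to $F^* \le F(\Theta,\Theta_\epsilon) + h(\lambda_\infty)/(1-\lambda_\infty)$.

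The critical remaining ingredient is an a priori bound on $\lambda_\infty$, obtained from a channel-capacity argument. Since $I_{\pi_k}(\phi;Y) \le C_1(\Phi)$ by the capacity-redundancy characterization, and since $\EE_{\pi_k}[D(P_\phi\|P_\theta)] \ge \EE_{\pi_k}[D(P_\phi\|\Theta)] \ge \lambda_k\epsilon$ (because $D(P_\phi\|\Theta)>\epsilon$ on $B$ and $\ge 0$ on $\Theta_\epsilon$), I obtain $\tilde J(\pi_k) \le C_1(\Phi) - \lambda_k\epsilon$; sending $k\to\infty$ and using $F^*\ge0$ then gives $\lambda_\infty \le C_1(\Phi)/\epsilon = \lambda_0$. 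A direct computation yields $\frac{d}{d\lambda}\!\left[\frac{h(\lambda)}{1-\lambda}\right] = \frac{-\log\lambda}{(1-\lambda)^2} \ge 0$ on $(0,1)$, so $\lambda\mapsto h(\lambda)/(1-\lambda)$ is monotone increasing on $[0,1)$; combined with $\lambda_\infty \le \lambda_0$ this completes the proof.

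The hardest part is recognizing that the self-referential bootstrap $\tilde J(\pi_{k,B})\le F^*$ is indispensable: substituting instead the seemingly sharper but non-circular bound $\tilde J(\pi_{k,B}) \le C_1(\Phi)-\epsilon$ and directly optimizing over $\lambda$ yields only $\log(1+e^{-\epsilon(1-\lambda_0)})$, which is weaker than $h(\lambda_0)/(1-\lambda_0)$ in the small-$\lambda_0$ regime. It is the combination of the bootstrap on $F^*$ with the separately-derived capacity constraint $\lambda_\infty\le\lambda_0$ that produces the clean closed form in the statement.
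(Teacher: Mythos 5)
Your argument is correct and follows essentially the same route as the paper's: the representation $F=\sup_\pi\tilde J(\pi)$ from Theorem~\ref{th:exist}, a Markov/capacity bound forcing the mass outside $\Theta_\epsilon$ to be at most $\lambda_0$, the near-concavity bound~\eqref{eq:j_ub} with the self-referential estimate $\tilde J(\pi_{k,B})\le F(\Theta,\Phi)$, and monotonicity of $\lambda\mapsto h(\lambda)/(1-\lambda)$. The only differences (near-maximizing sequence plus a limit instead of an arbitrary $\pi$ with $\tilde J(\pi)\ge0$ followed by a supremum) are cosmetic.
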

\begin{proof} By preceding Lemma, we can assume further that $D(P\|\Theta)<\infty$ for all $P\in \Phi$. Therefore,
Theorem~\ref{th:exist} applies, and in particular (see~\eqref{eq:f_capred}):
	$$ F(\Theta,\Phi) = \sup_{\pi} \tilde J(\pi)\,. $$
	Since $\Theta \subset \Phi$, we have that $F(\Theta,\Phi) \ge C(\Theta)\ge 0$. Thus, there exist $\pi$ such that
	$\tilde J(\pi) \ge 0$, and for any such $\pi$ we have (see~\eqref{eq:tilj_mi})
	$$ 0 \le \tilde J(\pi) = I(\phi; Y) - \EE[c(\phi,\theta)] \le C_1(\Phi) - \EE[c(\phi, \theta)]\,.$$
	This implies via Markov inequality that
	$$ \PP[c(\phi,\theta) > \epsilon] \le {C_1(\Phi)\over \epsilon}\,.$$
	This means that we can represent $\pi = \lambda \pi_1 + (1-\lambda) \pi_0$ with $\pi_0 = \sum_{k} p_{0,k}
	P_k\times \delta_{\theta_k}$ and $P_k \in \Theta_\epsilon$ whenever $p_{0,k}>0$. Furthermore, $\lambda \le
	{C_1(\Phi)\over \epsilon}$. By the bound~\eqref{eq:j_ub} we have
		$$ \tilde J(\pi) \le h(\lambda) + \lambda \tilde J(\pi_1) + (1-\lambda) \tilde J(\pi_0) \le h(\lambda) +
		\lambda F(\Theta,\Phi) + (1-\lambda) F(\Theta, \Theta_\epsilon)\,. $$
	Taking supremum over $\pi$ we obtain (after rearranging terms)
		$$ F(\Theta,\Phi) \le F(\Theta, \Theta_\epsilon) + {h(\lambda)\over 1-\lambda}\,.$$
	The proof is completed by noticing that $x\mapsto {h(x)\over 1-x}$ is increasing on $[0,1)$.
\end{proof}

\begin{proof}[Proof of Theorem~\ref{th:small}]
If we apply previous theorem with $\Phi$ replaced by $\Phi^{\otimes n}$ and $\epsilon$ replaced by $n\epsilon$ we
obtain:
	$$ F_n(\Theta, \Phi^{\otimes n}) \le F_n(\Theta, \Theta_{\epsilon_n}^{\otimes n}) + {h(\lambda_0) \over
	1-\lambda_0} $$
	where $\lambda_0 = {C_n(\Phi)\over n\epsilon} = {\tau_n\over \epsilon}$. If $\epsilon = \epsilon_n \gg \tau_n$
	then $\lambda_0\to0$ and the proof is complete.
\end{proof}

\subsection{On difference between $C_n(\Theta)$ and $C_n(\Theta_\epsilon)$}\label{sec:cex_cap}

As we discussed in~\eqref{eq:small_disc}, the meaning of Theorem~\ref{th:small} is to sandwich the misspecified
regret between two well-specified ones: $C_n(\Theta)$ and $C_n(\Theta_\epsilon)$. In this section, we demonstrate 
by a simple example that the growth rates of $C_n(\Theta)$ and $C_n(\Theta_\epsilon)$ could be very different.

%Thinking about the triangle inequality, the following conjecture seems plausible: Suppose that $\Phi$ is such that
%$C_n(\Phi) = o(n^\beta)$, some $\beta \le 1/2$. Then for $\epsilon_n = o(n^{-\alpha})$ with sufficiently large $\alpha$
%we should have 
%	$$ C_n(K_{\epsilon_n}) = C(K_0) + o(1)\,,$$
%where $K_{\epsilon} = \{\phi \in \Phi: c_1(\phi) \le \epsilon\}$ and $c_1(\phi) = \min_{\theta} D(P_\phi \|P_{\theta})$.

%It seems that this conjecture is not possible to show without extra assumptions (like finite-dimensional structure of a
%parameter space). To that end consider the following situation: 
We consider an extension of the Gaussian location model, where a single measurement $Y_1$ is
produced from $\phi = (\phi_0,\ldots)$ as
	$$ Y_1 = \phi + Z, \qquad Z \sim \mathcal{N}(0, I_{\infty})\,.$$
	(i.e. each of $Y_1, Y_2, \ldots, Y_n$ is itself an infinite sequence). We will take as $\Phi$ the Hilbert brick:
	$$ \Phi = \{\phi: 0 \le \phi_j\le 2^{-j}, j=0,\ldots\} = \prod_{j=0}^\infty [0,2^{-j}]\,.$$
	(It is known that $\Phi$ is compact in $\ell_2$, for example). Let $\Theta = \{\phi: \phi_1=\phi_2 =
	\cdots = 0\}$.

	Denote by $C_n([0,a])$ capacity of the 1D-GLM model. Then, from~\eqref{eq:glm_asymp} we have 
	$$ C_n(\Theta)=C_n([0,1]) = {1\over 2} \ln (n/(2\pi)) + o(1)$$

	For $\Phi$ we have
	$$ C_n(\Phi) = \sum_{j=0}^\infty C_n([0,2^{-j}])\,.$$
	 Indeed, since the conditional transformation
	$\theta \mapsto Y$ has structure of a parallel memoryless channel, the optimization in~\eqref{eq:capred} can be
	reduced to $\pi$ that are independent across coordinates of $\theta$, cf.~\cite[Theorem
	5.1]{polyanskiy2014lecture}. (The fact that there are countably-infinite
	number of coordinates does not cause any complications due to a certain continuity of 
	mutual information: $I(A_1,\ldots;
	B_1,\ldots) = \lim_{n\to\infty} I(A^n; B^n)$.) 

	To compute $C_n(\Phi)$, we note that from a special case of~\eqref{eq:glm_shtar} (with $\Theta =[0,a] \subset \mreals$, $\tau=0$, $v=1$) we get
\begin{equation}\label{eq:glm_1d}
	\Gamma_n([0,a]) = \log \left(1+a \sqrt{n\over 2\pi}\right)\,.
\end{equation}
	Shtarkov~\eqref{eq:glm_1d} we get
		$$ C_n([0,a]) \le \Gamma_n([0,a]) = \log(a\sqrt{n\over 2\pi} + 1)\,.$$
	We can also show that $\sum_{j=0}^\infty \log(a_0 2^{-j} + 1) \asymp (\log a_0)^2 $ within absolute constants. 
	Thus, we have
		$$ C_n(\Phi) = O(\log^2  n)\,. $$
	(In fact, the argument below also shows $C_n(\Phi) = \Theta(\log^2 n)$.)

	Next, note that $C_n([0,a]) = C_1([0,\sqrt{n}a])$ and from the asymptotics we know that $C_1([0,a])=\ln (a/(2\pi)) +
	o(1)$, implying that we always have
		$$ C_1([0,a]) \ge \ln a - c_1 $$
	for some constant $c_1>0$. Thus, we conclude
		$$ C_n([0,a]) \ge \ln (\sqrt{nc_2}a)\,,$$
	for some $0<c_2<1$. 

	Observe that 
		$$ \sum_{j\ge k} (2^{-j})^2 = {3\over 4} 4^{-k}\,.$$
	Therefore, any $\Theta_{\epsilon}$ always contains a sub-brick:
		$$ [0,1] \times \{0\} \cdots \times \{0\} \times [0,2^{-k_1}] \times [0,2^{-k_1-1}] \cdots \,,$$
	where $k_1$ is minimal such that
		$$ {3\over 4} 4^{-k_1} \le {1\over 2}\epsilon\,.$$
	That is $k_1 = {1\over 2} \log_2 {1\over \epsilon} + O(1)$. 
	Thus, we get that
		\begin{align}C_n(\Theta_{\epsilon}) &\ge C_n(\Theta) + \sum_{i\ge k_1} C_n([0,2^{-i}]) \\
				&\ge C_n(\Theta) + \sum_{j\ge 0} \max(\ln(\sqrt{nc_2} 2^{-k_1-j}), 0) \\
				&\ge C_n(\Theta) + c (\ln (n \epsilon))^2\,,
		\end{align}				
	for some constant $c>0$. Overall, we see that for any $\epsilon>0$ the order of $C_n(\Theta) \asymp \log n$
	whereas $C_n(\Theta_\epsilon) \asymp \log^2 n$. 

	%Furthermore, for $\epsilon \gg {1\over n}$ we have 
%		$$ C(K_{\epsilon_n}) \not\to C(\Theta)=C(K_0)\,.$$

\section{Open questions}\label{app:open}

\begin{itemize}
\item Are there examples where $C_n \ll F_n^{(PAC)} \asymp \Gamma_n < \infty$ ? What about $C_n \ll F_n^{(PAC)} \ll \Gamma_n$?
\item Assumptions in our Theorem~\ref{th:exist} unfortunately rule out the PAC case of $\Phi = \calP_{iid}(\calY^n)$. Can we extend it
to this case? Even in the special case of the setting of Theorem~\ref{th:glm}, can we prove existence and uniqueness 
of the minimizer?
\item In Theorem~\ref{th:glm} what is the order of the difference between $F_n^{(PAC)}$ and $C_n$?
\item In the context of Theorem~\ref{th:glm} can it be shown that no Bayes mixture is able to achieve optimal 
$F_n^{(PAC)} + o(1)$ regret? (Perhaps it can even be shown that no estimator with Gaussian tails can do so.)
\item Extend Theorem~\ref{th:glm} to (a) exponential families, (b) general smooth families.
%\item As we remarked in Section~\ref{sec:tech_rem} our definition of $F_n^{(PAC)}$ depends on the chosen versions of
%densities $f_\theta$ and not invariant to measure-zero modifications thereof. However, we can also define two morally
%equivalent quantities with classes $\tilde\Phi_n = \{P^{\otimes n}: P \ll \mu^{\otimes n}$ and $\widehat{\Phi}_n = \{P^{\otimes n}:
%D(P\|\Theta)<\infty\}$. These ones actually do only depend on $P_\theta$. For the latter we know that minimizer $Q^*_{Y^n}$ always exist. What about the former?
%It is clear that corresponding regrets satisfy
%	$$ C_n \le \widehat{F}_n \le \tilde F_n \le F_n^{(PAC)} \le \Gamma_n\,.$$
%	As example below demonstrates we know it is possible to have $C_n <\infty$ but $F_n^{(PAC)}=\infty$. Is it
%	possible to have $C_n < \infty$ but $\tilde F_n =\infty$? What about Are similar
%	examples possible for the other two quantities?
%\item Example of $F_n^{(PAC)}$ being different from $\circ F_n^{(PAC)}$.
\item \textbf{(closed)} Consider $\calY = \mathbb{Z}_+$ and $\{P_\theta\}$ to be the class of all distributions on $\calY$ with first
moment bounded by 1. It is easy to show that $C_n < \infty$, while $\Gamma_n = F_n^{(PAC)} = \infty$ (Shtarkov sum
is unbounded). 
We note, cf.~\cite{jia2021zplus}, that restricting $\Phi_n$ to a subset $\Phi_n = \{P^{\otimes n}:
D(P\|\Theta)<\infty\}$, still results in $F_n(\Theta,\Phi_n) = \infty$ (note that with this
restriction on the data generating distribution
the oracle loss is always finite, although unbounded). We also mention that for this model class
the results of~\cite{boucheron2008coding} show $C_n = \omega(n^\alpha)$ for any $\alpha < 1/2$,
and a more detailed analysis~\cite{jia2021zplus} shows $C_n = \tilde \Theta(\sqrt{n})$.

\apxonly{
\item \textcolor{blue}{\textbf{ADD:} Prove batch versions of various results in this paper!}
}
\end{itemize}

\end{document}